\documentclass[10pt,twocolumn,letterpaper]{article}

\usepackage[pagenumbers]{cvpr} 
\usepackage{url}            
\usepackage{booktabs}       
\usepackage{amsfonts}       
\usepackage{nicefrac}       
\usepackage{microtype}      
\usepackage{xcolor}         
\usepackage{graphicx}
\usepackage{epstopdf}
\usepackage{amsmath}
\usepackage{subcaption}
\usepackage{algorithm}
\usepackage{algpseudocode}
\usepackage{amsmath}
\usepackage{multirow}
\usepackage{makecell}
\usepackage{utfsym}
\usepackage{bm}
\usepackage{amssymb, amsthm}
\newtheorem{theorem}{Theorem}[section]

\definecolor{cvprblue}{rgb}{0.21,0.49,0.74}
\usepackage[pagebackref,breaklinks,colorlinks,allcolors=cvprblue]{hyperref}

\def\paperID{*****} 
\def\confName{CVPR}
\def\confYear{2026}

\title{GFPL: Generative Federated Prototype Learning for Resource-Constrained and Data-Imbalanced Vision Task}

\author{  Shiwei Lu$^{1,2}$, Yuhang He$^{1}$, Jiashuo Li$^{1}$, Qiang Wang$^{1}$, Yihong Gong$^{1}$ \thanks{Corresponding author}\\    
	$^1$Xi'an Jiaotong University\\ $^2$ Air Force Engineering University \\
}

\begin{document}
\maketitle
\begin{abstract}
Federated learning (FL) facilitates the secure utilization of decentralized images, advancing applications in medical image recognition and autonomous driving. However, conventional FL faces two critical challenges in real-world deployment: ineffective knowledge fusion caused by model updates biased toward majority-class features, and prohibitive communication overhead due to frequent transmissions of high-dimensional model parameters. Inspired by the human brain’s efficiency in knowledge integration, we propose a novel Generative Federated Prototype Learning (GFPL) framework to address these issues. Within this framework, a prototype generation method based on Gaussian Mixture Model (GMM) captures the statistical information of class-wise features, while a prototype aggregation strategy using Bhattacharyya distance effectively fuses semantically similar knowledge across clients. In addition, these fused prototypes are leveraged to generate pseudo-features, thereby mitigating feature distribution imbalance across clients. To further enhance feature alignment during local training, we devise a dual-classifier architecture, optimized via a hybrid loss combining Dot Regression and Cross-Entropy. Extensive experiments on benchmarks show that GFPL improves model accuracy by 3.6\% under imbalanced data settings while maintaining low communication cost. 
\end{abstract}    
\section{Introduction}
\label{sec:introduction}

The proliferation of Internet of Things ($\text{IoT}$) devices generates massive multi-sensor heterogeneous data streams, enabling environment-aware intelligence via deep learning. However, privacy concerns and prohibitive communication costs create fragmented data silos, preventing centralized curation. Federated Learning ($\text{FL}$) emerges as a decentralized alternative, coordinating distributed model training through parameter exchange rather than raw data sharing. Despite preliminary success in $\text{IoT}$ applications \cite{kou2025fast,zhang2024survey,arunan2023federated}, two persistent challenges hinder practical $\text{FL}$ deployment: (1) ineffective knowledge fusion under prevalent imbalanced and non-IID data distributions \cite{dai2023tackling,zhou2023fedfa}, and (2) inefficient training caused by parameter-various model interactions \cite{xiong2023feddm,crawshaw2024federated}. Specifically, traditional parameter aggregation induces gradient conflicts under data shifts, compromising global model convergence, and frequent high-dimensional parameter transmission severely strains resource-constrained $\text{IoT}$ environments. These dual constraints fundamentally limit current $\text{FL}$ frameworks in real-world deployments.

Current approaches to federated knowledge integration, such as regularization constraints \cite{yuan2022convergence} and knowledge distillation\cite{zhang2022fine}, attempt to mitigate parameter divergence through model alignment or logit-based knowledge transfer. However, these methods either incur prohibitive communication overhead from continuous parameter transmission\cite{lu2023top,kim2024spafl}  or introduce secondary data collection challenges for output space calibration\cite{hong2024fedavp}. 
Drawing inspiration from generative learning\cite{cao2022perfed} and prototype learning\cite{tan2022fedproto}, we propose Generative Federated Prototype Learning (GFPL) – a communication-efficient paradigm that achieves cross-client knowledge fusion through prototype interaction while eliminating model parameter transmission. The core insight stems from two biological analogies:

\textbf{Distributed concept refinement: }Mirroring human cognitive mechanisms where object concepts evolve through prototype interaction (e.g., forming the enriched prototype "blue-small/large-wheeled" for "car" by merging local prototypes from "sky blue-small-wheeled" and "ocean blue-large-wheeled", GFPL enhances knowledge fusion effects via prototype generation and interaction.

\textbf{Generative concept augmentation:}Emulating the human ability to instantiate abstract prototypes as concrete mental images, GFPL leverages prototype-driven generation method to convert global class prototypes into diversified pseudo-features across client devices, mimicking neurocognitive materialization processes.

Our main contributions are summarized as follows:

(1) We devised a Gaussian Mixture Model-based prototype generation mechanism and a Bhattacharyya distance-driven prototype fused method in visual FL to ensure secure and effective cross-client information interaction with minimal communication overhead

(2) To address the challenge of feature shift caused by imbalanced data, we designed a dual-classifier structure and a hybrid loss function for local training. This design synchronously aligns distributed features with both predefined vectors and class labels, which effectively enhances both the global consistency of intra-class features and the global separability of inter-class features.

(3) To enhance model generalization, we developed a pseudo-feature generation method to retrain the projection layer within the dual-classifier structure. By integrating all proposed components, a novel Generative Federated Prototype Learning (GFPL) framework and algorithm were proposed. Extensive experiments demonstrate that GFPL achieves the highest average test accuracy with the lowest communication cost.

\section{Related Work}
\label{sec:formatting}
\subsection{The knowledge fusion of imbalanced data in FL}
IoT-generated data distributions exhibit significant heterogeneity due to diverse preferences and collection environments of clients, leading to substantial parameter divergence between locally trained models in FL and hindering effective aggregation of the global model. In FL, knowledge fusion from imbalanced data seeks to distill distributed knowledge into a unified global model, which has better generalization capabilities than local models trained by clients. Recent advances address this challenge through three primary approaches:
(1) Data-level: mitigate source imbalance through distribution enhancement, including data augmentation/synthesis \cite{hong2024fedavp,cao2022perfed} and self-supervised pretraining \cite{li2021model};
(2) Model-level: adapt to data heterogeneity via structural/training innovations, featuring model personalization \cite{tan2022fedproto,tan2022towards}, dynamic parameter pruning \cite{yang2023dynamic}, knowledge distillation \cite{zhang2022fine,yao2023fedgkd}, and feature decoupling-alignment \cite{zhou2023fedfa};
(3) Framework-level: strengthen system robustness through architectural improvements, such as adaptive optimization \cite{karimireddy2020scaffold}, meta learning \cite{yang2023personalized}, client clustering \cite{ghosh2020efficient}, and hybrid FL paradigms \cite{liu2024fedcd}.
\subsection{Prototype learning}
Rooted in cognitive psychology, prototype theory posits that humans abstract category concepts through typical feature aggregation-a mechanism mirroring feature extraction processes of deep learning \cite{snell2017prototypical,zhou2023revisiting}. As an interpretable representation paradigm, prototype learning constructs abstract representations of typical samples, demonstrating cross-domain knowledge representation capabilities \cite{huang2023rethinking}, such as category centroids via feature vector averaging in Computer Vision (CV) field \cite{tan2022fedproto,zhang2024fedtgp}, semantic intent prototypes through query clustering in dialogue systems of Natural Language Processing (NLP) \cite{tu2023bag,he2020learning}, and heterogeneous data alignment bridges \cite{wang2022cross,liu2021adaptive} in Cross-Modal Learning, exemplified by contrastive image-text prototype space of CLIP \cite{luddecke2022image}.
In FL, FedProto \cite{tan2022fedproto} pioneers prototype integration by aggregating class centroids of features  into class-specific global prototypes. However, weighted averaging of heterogeneous client prototypes risks dominance by low-quality instances and limits class representation fidelity. Thus, enhancing prototype robustness becomes critical for addressing data imbalance in FL systems.
\subsection{Generative learning}
Generative learning addresses data scarcity and class imbalance by modeling data distributions to synthesize samples/features, enhancing model generalization through implementations like few-shot learning with pseudo-sample generators trained on base-class data \cite{wang2020generalizing} and novel-class generation via semantic embeddings \cite{xu2022generating}. Breakthroughs in generative models, including StyleGAN \cite{karras2019style}, GPT \cite{mann2020language}, and DeepSeek \cite{guo2025deepseek}, have enabled high-fidelity image/text synthesis. In FL, these techniques facilitate knowledge sharing via server-side pretrained generators \cite{zhang2024upload,zhu2021data} or local knowledge distillation for task-relevant image construction \cite{jiang2024fedpa}. However, current approaches face limitations, such as dependency on auxiliary datasets, conflicts between model transmission and communication overhead, and potential privacy risks from synthetic data.
\section{Problem setting}
\label{sec:problem_setting}
\subsection{Knowledge fusion of imbalanced data in federated learning }
In FL with $m$ clients, each client possesses a private dataset $\mathbb{D}_i$ drawn from a distribution $\mathbb{P}_{i}(x,y)$, where $x,y$ denotes an input sample and its labels. During initialization of a supervised learning task, clients download a unified global model $\mathcal{F}(\omega,\cdot)$ from the server, which comprises a feature extractor $f(\omega_1,\cdot )$ and a classifier $g(\omega_2,\cdot )$ , and $\omega=(\omega_1,\omega_2)$ is the model parameters (e.g. Convolutional Neural Networks(CNN) model). For an input $x$, the feature embedding $f(\omega_1,x) \in \mathbb{R}^{d} $ is processed by the classifier to generate predictions $g(\omega_2,f(\omega_1,x))$, where $d$ is the dimension of flatten feature. Clients locally optimize their models via loss function $\mathcal{L}_{local}$ (e.g., cross-entropy) with $\left \{ \mathbb{D}_i \right \}_{i \in [m]}$.  The classical FedAvg framework \cite{mcmahan2017communication} aims to minimize:
\begin{equation}\label{con:eq_1} \small
	\mathcal{L}_{gloal} = \underset{\omega}{\arg \min } \sum_{i=1}^{m} \frac{\left|\mathbb{D}_{i}\right|}{N} \mathcal{L}_{local}(\mathcal{F}(\omega ; x), y),
\end{equation}
where $\mathit{N}$ is the total samples across clients. Generally, FedAvg can obtain an effective global model when distributions $\left \{\mathbb{P}_{i}(x,y)\right \}_{i \in [m]}$ are Independent and Identically Distributed(IID).  However, in reality each client collects data from different scenarios, which makes $\mathbb{P}_{i}$ varies between clients and exacerbates global imbalance of FL. In addition, resource-constrained devices (e.g., sensors, smartphones) suffer from limited computation, storage, and communication bandwidth. While conventional methods \cite{zhou2023fedfa,mcmahan2017communication,li2020federated} improve knowledge fusion via model-weighted aggregation, they incur prohibitive communication overhead and storage costs. To address these challenges, we propose a dual-classifier structure for local training and a novel prototype generation and interaction method, where the former aims to align intra-class features of $\left \{\mathbb{D}_i \right \}_{i \in [m]}$ and the latter is designed to enhance model generalization by pseudo-feature generation.
\subsection{Classifier with equiangular tight frame (ETF)}
Drawing from Neural Collapse(NC) theory \cite{papyan2020prevalence}, revealing that features under IID data conditions collapse into symmetric geometric structures, where class embeddings become maximally separable and equidistant. Inspired by this theory, we introduce a predefined ETF classifier to enforce alignment of distributed features to the corresponding ETF vectors. Furthermore, intra-class consistency and inter-class discriminability of features can be enhanced, effectively addressing challenges such as feature/label shift across clients.

\textbf{Definition (simple equiangular tight frame)}: for a classification task with $K$ classes, its ETF classifier can be designed by:
\begin{equation}\label{con:eq_2} \small
	\mathbf{M}=\sqrt{\frac{K}{K-1}} \mathbf{U}\left(\mathbf{I}_{K}-\frac{1}{K} \mathbf{1}_{K} \mathbf{1}_{K}^{T}\right),
\end{equation}
where $\mathbf{M}=\left[\mathbf{m}_{1}, \ldots, \mathbf{m}_{K}\right] \in \mathbb{R}^{d \times K}, \mathbf{U} \in \mathbb{R}^{d \times K}$ is a matrix where column vectors are orthogonal and satisfy  $\mathbf{U}^\mathit{T}\mathbf{U}=\mathbf{I}_{K}$, $\mathbf{I}_{K}$ is the identity matrix, and $\mathbf{1}_{K}$ is an all-ones vector.
Each vector in the simple ETF has an equal $\ell_{2}$ norm and the same pair-wise angle, i.e.,
\begin{equation}\label{con:eq3} \small
	\mathbf{m}_{i}^{T} \mathbf{m}_{j}=\frac{K}{K-1} \delta_{i, j}-\frac{1}{K-1}, \forall i, j \in[1, \ldots, K],
\end{equation}
where $\delta_{i, j}=1$ when $i=j$ and 0 otherwise.The pair-wise angle $-\frac{1}{K-1}$ is the maximal equiangular separation of $K$ vector in  $\mathbb{R}^{d}$.
\subsection{Gaussian Mixture Model (GMM)}
GMM\cite{reynolds2009gaussian} is a probability-based generative model for modeling complex data distributions. Theoretically, GMM can fit arbitrarily complex distributions. The core idea is to approximate an arbitrary continuous probability density function through a linear combination of multiple Gaussian distributions.
Generally, a GMM model consists of $n$ Gaussian distributions (or components), and the parameters of each component include the mean $\mu_{i}$, covariance matrix $\Sigma_{i}$ and weight $\pi_i $. The probability density function of GMM is:
\begin{equation}\label{con:eq_4} \small
	p(\boldsymbol{x})=\sum_{i=1}^{n} \pi_{i} \cdot \mathcal{N}\left(\boldsymbol{x} \mid \boldsymbol{\mu}_{i}, \boldsymbol{\Sigma}_{i}\right),
\end{equation}
where $\pi_i\ge0 $, $\sum_{i=1}^{n} \pi_{i}=1$, and $\mathcal{N}$ represents the Gaussian distribution. GMM parameters are estimated by an Expectation-Maximization (EM) algorithm, which is divided into two steps,

\textbf{E step:} Calculate the posterior probability with Eq. \Ref{con:eq_5} that data point $\boldsymbol{x}_j$ belongs to $i$-th component
\begin{equation}\label{con:eq_5} \small
	\gamma_{i j}=\frac{\pi_{i} \mathcal{N}\left(\boldsymbol{x}_{j} \mid \boldsymbol{\mu}_{i},  \boldsymbol{\Sigma}_{i}\right)}{\sum_{k=1}^{n} \pi_{k} \mathcal{N}\left(\boldsymbol{x}_{j} \mid \boldsymbol{\mu}_{k}, \boldsymbol{\Sigma}_{k}\right)} .
\end{equation}

\textbf{M step:} According to $\gamma_{i j}$ update parameters:
\begin{equation}\label{con:eq_6} \small
	\begin{split}
	\pi_i=\frac{N_i}{N},
	\quad\boldsymbol{\mu}_i=\frac{\sum_{i=1}^N\gamma_{ij}\boldsymbol{x}_j}{N_i}, \\ \quad\boldsymbol{\Sigma}_i=\frac{\sum_{i=1}^N\gamma_{ij}(\boldsymbol{x}_j-\boldsymbol{\mu}_i)(\boldsymbol{x}_j-\boldsymbol{\mu}_i)^\top}{N_i},
	\end{split}
\end{equation}
where $N_i=\sum_{i=1}^N\gamma_{ij}$. The EM algorithm needs to be iterated until it converges. We employ GMM to model the feature distributions within the same class, thereby obtaining statistical information of intra-class features.
\subsection{Bhattacharyya Distance}
The Bhattacharyya Distance is for quantifying the similarity between two probability distributions. For two univariate Gaussian distributions $\mathcal{N}(\mu_1, \sigma_1^2)$ and $\mathcal{N}(\mu_2, \sigma_2^2)$, the Bhattacharyya Distance has a closed-form solution:
\begin{equation} \small
	D_B = \frac{1}{4} \frac{(\mu_1 - \mu_2)^2}{\sigma_1^2 + \sigma_2^2} + \frac{1}{2} \ln \left( \frac{\sigma_1^2 + \sigma_2^2}{2\sigma_1\sigma_2} \right)
\end{equation}

For multivariate Gaussian distributions $\mathcal{N}(\bm{\mu}_1, \bm{\Sigma}_1)$ and $\mathcal{N}(\bm{\mu}_2, \bm{\Sigma}_2)$, the Bhattacharyya Distance is given by:
\begin{equation} \small
	D_B = \frac{1}{8} (\bm{\mu}_1 - \bm{\mu}_2)^\top \bm{\Sigma}^{-1} (\bm{\mu}_1 - \bm{\mu}_2)	+ \frac{1}{2} \ln \left( \frac{\det \bm{\Sigma}}{\sqrt{\det \bm{\Sigma}_1 \det \bm{\Sigma}_2}} \right)
\end{equation}
where $\bm{\Sigma} = \frac{\bm{\Sigma}_1 + \bm{\Sigma}_2}{2}$. By setting a threshold $C$, components from different GMMs can be merged.

\section{Methodology}
\label{sec:methodology}
We present a\textbf{ Generative Federated Prototype Learning (GFPL)} framework addressing ineffective knowledge fusion and prohibitive communication overhead in FL through two innovative components: 1) A \textbf{Dual-Classifier Structure (DCS)} for cross-client feature alignment, and 2) A \textbf{Pseudo Feature Generation (PFG)} mechanism enhanced by our prototype generation and  interaction protocol. As illustrated in Fig.\ref{fig:gfpl}, consider a FEMNIST image classification task \cite{caldas2018leaf} with skewed digit distributions across clients, each client derives class-wise prototypes via GMM from local features. 
\begin{figure*}[htbp]
	\centering 
	\fbox{\includegraphics[scale=0.35]{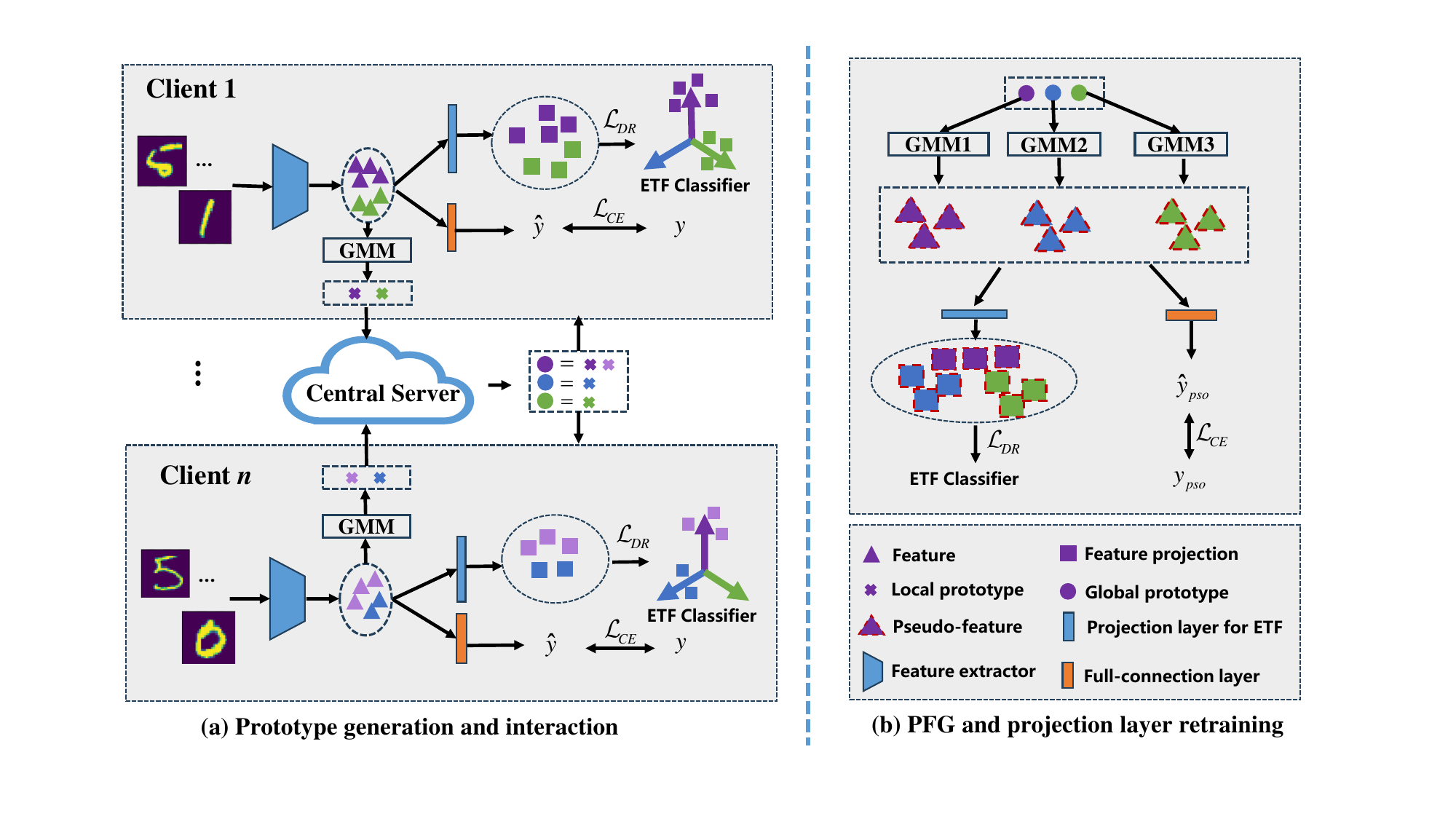} } 
	\caption{Generative Federated Prototype Learning (GFPL):(a) the client  trains feature extractor, projection layers through local data, while generating local prototypes with GMM and uploading them to the server for prototype interaction; (b) After receiving global prototypes, the client replaces local prototypes with global prototypes and further generates balanced pseudo features through GMM sampling, thereby retraining the model projection layer.}
	\label{fig:gfpl}
\end{figure*}
The server aggregates overlapping prototypes (e.g., digit 5 from multiple clients) through concatenation while preserving unique ones (e.g., digits 0/1) to form global prototypes. These unified global prototypes enable clients to generate balanced pseudo-features for underrepresented classes, effectively mitigating ineffective knowledge fusion from imbalanced data through projection layer retraining, while maintaining communication efficiency via lightweight prototype transmission.

\subsection{Feature Alignment with the Dual-Classifier Structure  (DCS)} \label{section41}
In data-imbalanced FL scenarios, the difficulty in cross-client feature alignment has resulted in ineffective knowledge fusion. Although federated prototype learning (Fedproto) \cite{tan2022fedproto} addresses this by aligning local features to global  prototypes (class centroids), its reliance on frequent prototype interaction incurs communication overhead. Thus, such a question arises: \textbf{can we achieve cross-client feature alignment without incurring communication overhead?}

Building on neural collapse theory in imbalanced learning \cite{yang2022inducing}, we address above problem in FL where a predefined Equiangular Tight Frame (ETF) classifier is designed to enhance intra-class consistency and inter-class separability of cross-client features. Although recent work \cite{li2023no} integrated ETF into FL for knowledge fusion through model aggregation, none explore their synergy with prototype-based methods. Thus, we propose a dual-classifier structure to embed ETF into CNN, coupled with a hybrid loss function for local training, which mitigates feature misalignment of imbalanced data without incurring communication overhead.

\textbf{Construction of ETF classifier:}  For a $K$ classification task, we first generate a random matrix  $\mathbf{A}\in \mathbb{R}^{d\times K} $, where $d$ is the dimension of feature and $d>K$. Afterwards, compute the reduced QR decomposition of the matrix $\mathbf{A}$:
\begin{equation}\label{con:eq_7} \small
	\mathbf{A} = \mathbf{Q}\mathbf{R},
\end{equation}
where $\mathbf{Q}=[\mathbf{q}_{1},\mathbf{q}_{2},...,\mathbf{q}_{K}]\in \mathbf{R}^{d\times K} $ is a matrix where column vectors are orthogonal and satisfy  $\mathbf{Q}^\mathit{T}\mathbf{Q}=\mathbf{I}_{K}$, and $\mathbf{R}$ is an upper triangular matrix. Then, we generate the ETF classifier $\mathbf{Z}$ using $\mathbf{Q}$:
\begin{equation}\label{con:eq_8} \small
	\mathbf{Z}=\sqrt{\frac{K}{K-1}} \mathbf{Q}\left(\mathbf{I}_{K}-\frac{1}{K} \mathbf{1}_{K} \mathbf{1}_{K}^{T}\right),
\end{equation}
where $\mathbf{Z}=\left[\mathbf{z}_{1}, \ldots, \mathbf{z}_{K}\right] \in \mathbb{R}^{d \times K}$ is the  concatenated matrix of $K$ column vectors and each vector $\mathbf{z}_{i} \in \mathbb{R}^{d \times 1}$ has the same dimension with the feature.

\textbf{Modification of the model architecture:}The ETF classifier is designed to align diverse features to predefined ETF vectors, facilitating knowledge fusion and enhancing separability between distinct knowledge. However, conventional neural networks (e.g., CNNs) typically produce sparse feature vectors (with many zero-valued dimensions) due to ReLU activations, whereas ETF vectors are inherently dense. To bridge this gap, a learnable projection layer $l(\omega_3,\cdot)$ is appended after the feature extractor \cite{li2023no}. For an input $x$, the process first extracts its feature $f(\omega_1, x )$, then projects it into the ETF-aligned space and normalizes output $\hat{\boldsymbol{h}}$ to obtain feature projection $\boldsymbol{h}$:
\begin{equation}\label{con:eq_9} \small
	\boldsymbol{h}=\hat{\boldsymbol{h}}/\|\hat{\boldsymbol{h}}\|_{2},\quad\hat{\boldsymbol{h}}=l(w_3, f(\omega_1, x)),
\end{equation}
where $w_1$ and $w_3$ are parameters of feature extractor and ETF projection layer.

\textbf{Dot regression loss for ETF classifier}: Neural collapse theory suggests that features of balanced data converge to class prototypes, which further collapse to their corresponding ETF vectors in the final training phase. To leverage this theory, we designs a dot regression loss for ETF classifier:
\begin{equation}\label{con:eq_10} \small
	\mathcal{L}_{DR}(\mathbf{h},\mathbf{Z})=\frac{1}{2}\left(\mathbf{h}_c^{T}\mathbf{z}_c-1\right)^2,
\end{equation}
where $c$ is the label of $\mathbf{h}$ and $\mathbf{Z}$ is the predefined ETF classifier. Since $\mathbf{h}$ and $\mathbf{z}_c$  are normalized vectors, the dot product between these two vectors is equal to 1 when  $\mathbf{h}_c$ collapses to $\mathbf{z}_c$. 

\textbf{Design of the dual-classifier structure:} In centralized learning, ETF has proven effective in replacing learnable classifiers to mitigate performance degradation under imbalanced data\cite{yang2022inducing}. However, simply using ETF projection $l(\omega_3,\cdot)$ and dot regression loss fails to ensure feature separability. As illustrated in Fig.\ref{fig:fig_2}, despite feature projections alignment with ETF vectors under $\mathcal{L}_{DR}$, discriminability between classes in low-dimensional space remains suboptimal. To address this, we integrate cross-entropy loss $\mathcal{L}_{CE}$ with the existing dot regression loss through dual-classifier collaboration, an ETF classifier $l(\omega_3,\cdot)$  and a trainable classifier $g(\omega_2,\cdot )$. The hybrid loss function is thus formulated as:
\begin{equation}\label{con:eq_11} \small 
	\mathcal{L}_{train}(\mathbf{h},\mathbf{Z},\mathbf{g},\mathbf{y}_c)=\lambda\mathcal{L}_{DR}(\mathbf{h},\mathbf{Z})+\mathcal{L}_{CE}(\mathbf{g},\mathbf{y}_c),
\end{equation}
where $\mathbf{g}=g(\omega_2,f(\omega_1,\mathbf{x}) )$, $\mathbf{y}_c$ is the label set of inputs $\mathbf{x}$ and $\lambda$ is the hyperparameter.

Compared with Fig.\ref{fig:fig_2a} and Fig.\ref{fig:fig_2b}, we can observe that $\mathcal{L}_{CE}$ enhances feature separability. In addition, experimental results demonstrate that models trained with objective Eq.(\ref{con:eq_11}) achieve superior performance (average accuracy on CIFAR10 = 0.732) over FedProto (0.725)\cite{tan2022fedproto}, despite eliminating class prototype exchange. This implies local optimization with Eq.(\ref{con:eq_11}) suffices to outperform federated prototype learning. Consequently, we propose a critical research question: \textbf{What constitutes essential information exchange in federated prototype learning?} 

\begin{figure}[htbp] 
	\centering
	\begin{subfigure}{0.48\linewidth}
		\centering
		\includegraphics[scale=0.22]{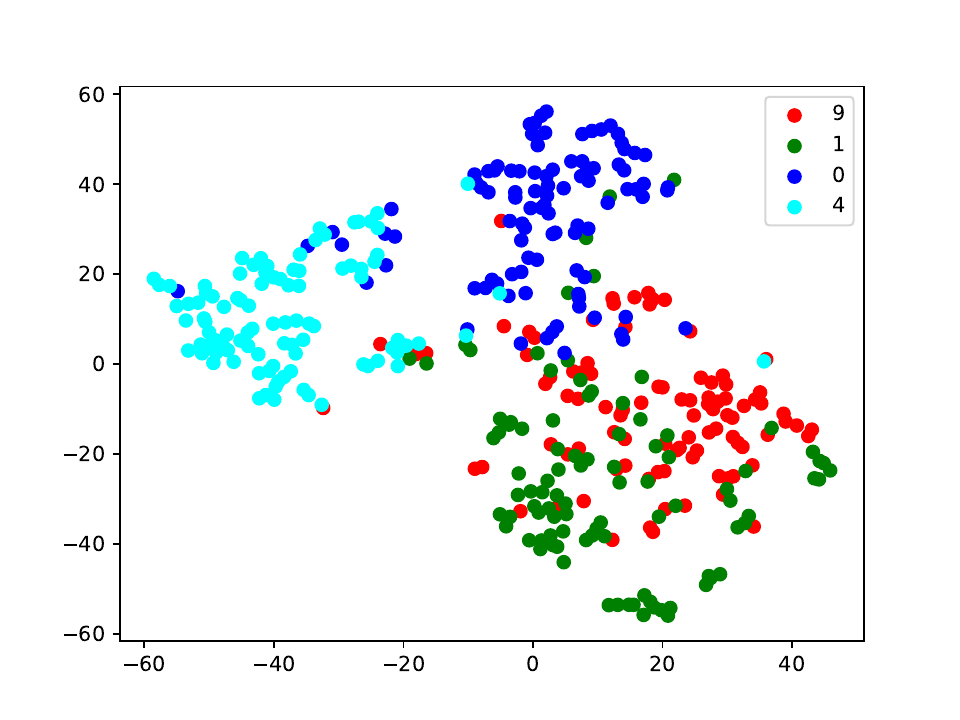}
		\caption{}
		\label{fig:fig_2a}
	\end{subfigure}
	\begin{subfigure}{0.48\linewidth}
		\centering
		\includegraphics[scale=0.22]{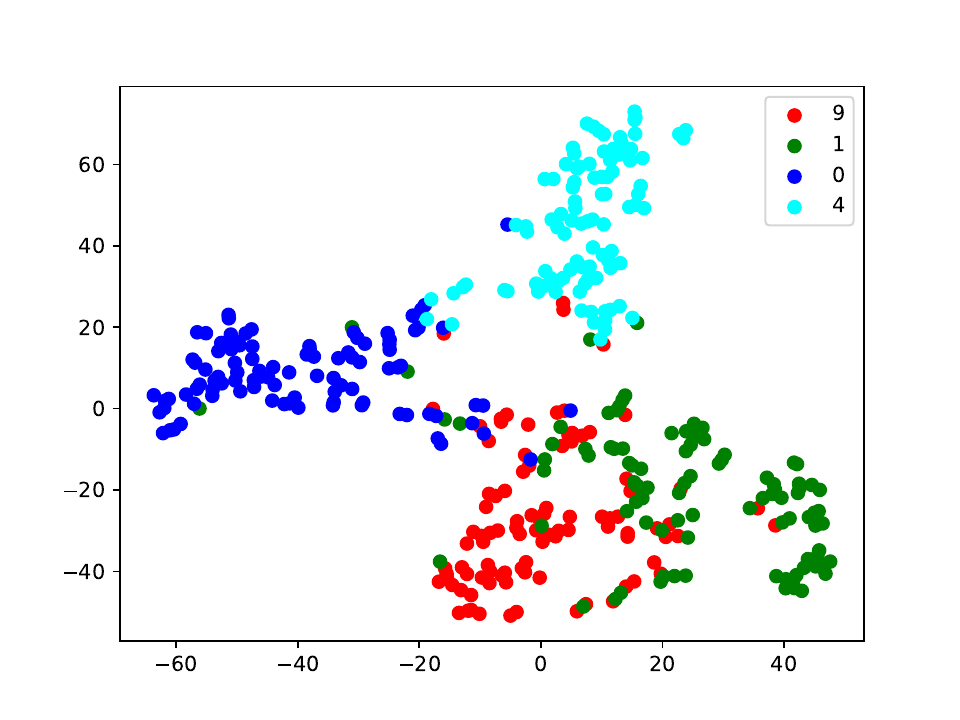}
		\caption{}
		\label{fig:fig_2b}
	\end{subfigure}
	
	\vspace{0.5cm} 
	
	\begin{subfigure}{0.48\linewidth}
		\centering
		\includegraphics[scale=0.22]{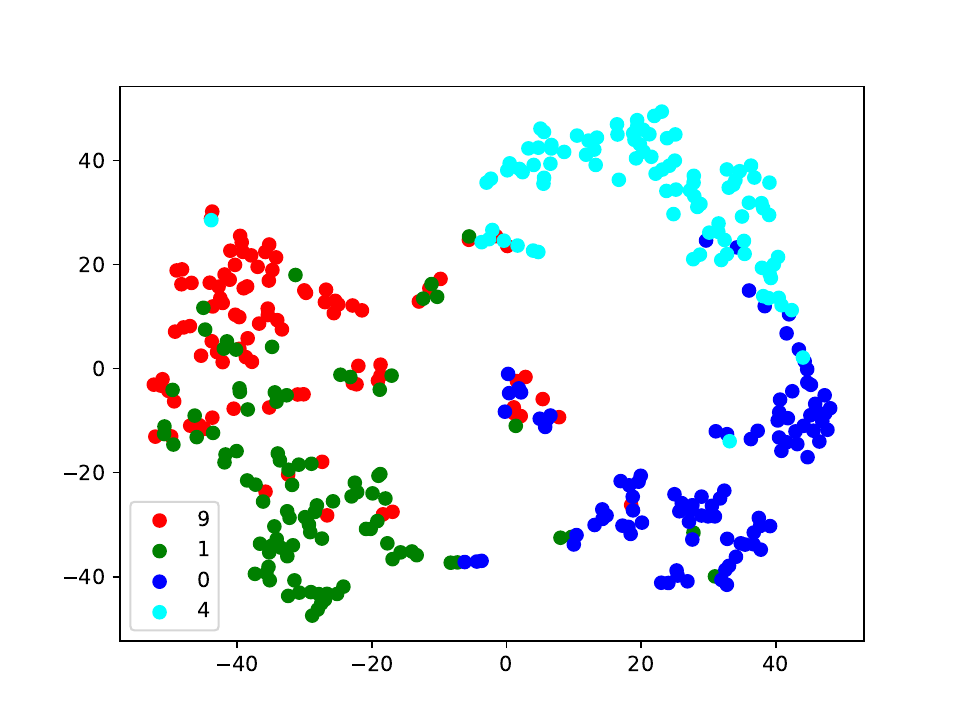}
		\caption{}
		\label{fig:fig_2c}
	\end{subfigure}
	\begin{subfigure}{0.48\linewidth}
		\centering
		\includegraphics[scale=0.22]{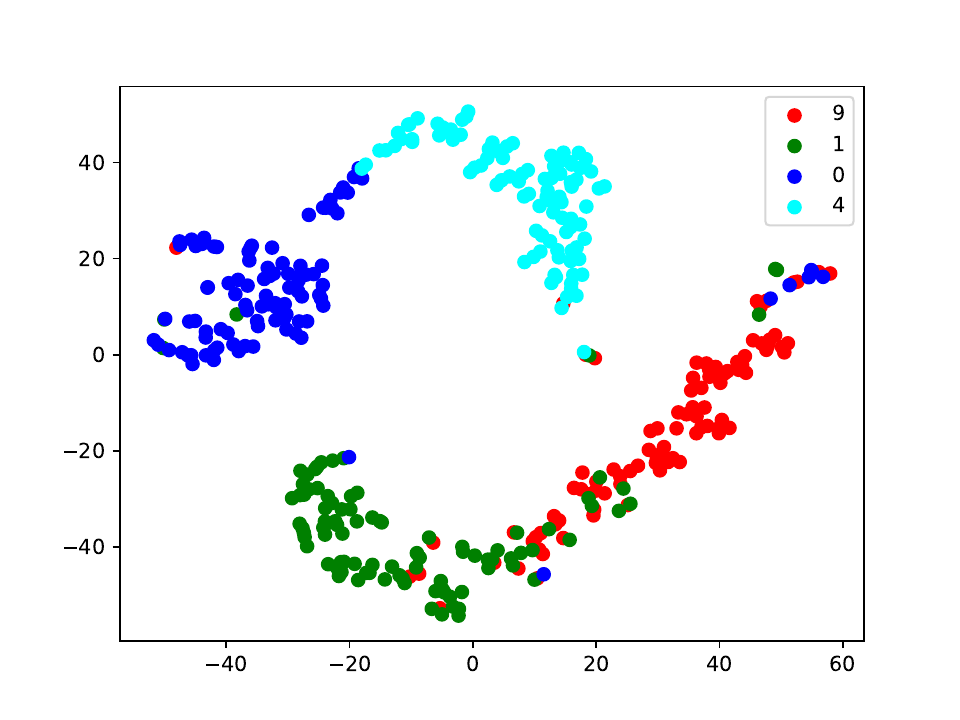}
		\caption{}
		\label{fig:fig_2d}
	\end{subfigure}
	
	\caption{Visualization of features and projections under different loss functions: (a) only $\mathcal{L}_{DR}$ (features); (b) $\mathcal{L}_{DR}+\mathcal{L}_{CE}$ (features); (c) only $\mathcal{L}_{DR}$ (projections); (d) $\mathcal{L}_{DR}+\mathcal{L}_{CE}$ (projections)}
	\label{fig:fig_2}
\end{figure}

Recall that FedProto \cite{tan2022fedproto} employs prototype alignment (local-to-global) to enhance local learning capacity, yet its generalization remains constrained by limited local data/feature. Although many studies \cite{luo2021no,li2024feature,chen2023federated} demonstrate that data generation methods can improve the generalization performance of local training (e.g., generating the similar data as client $m$ at client $1$), conventional data generation methods like GANs\cite{cao2022perfed} and diffusion models\cite{li2024feddiff} often suffer from fidelity-privacy trade-offs. To address this, GFPL introduces a prototype interaction and pseudo-feature generation  mechanism to enhance model generalization while preserving data privacy.

\subsection{Improving model generalization with Pseudo Feature Generation (PFG)}

The purpose of pseudo feature generation is to generate diverse and uniform features to retrain projection layers of DCS, enabling diversified features mapping to ETF vectors.

\textbf{Prototype generation and interaction}:Given input $x \in \mathbb{R} ^{C\times W\times H} $, the feature extractor generates $f(\omega_1, x )\in \mathbb{R}^{d}$. Let  $ \mathbb{C}=\left \{ \mathbb{C}_i \right \}_{i \in [m]}$ denote the total class set, where client $i$ has $ N_{i,j}$ samples for class $j$. We employ a GMM with $n$ components to fit the class feature distributions and form the local prototype $\textbf {\textit{P}}_{i,j}$ for class \textit{j} at the client \textit{i} with $(\boldsymbol{\mu}_{i,j},\boldsymbol{\sigma}_{i,j},\boldsymbol{\pi}_{i,j})$,
where $\boldsymbol{\mu}_{i,j}\in \mathbb{R}^{d\times n}$ is the mean matrix, $\boldsymbol{\sigma}_{i,j} \in \mathbb{R}^{d\times n}$ is the standard deviation, $\boldsymbol{\pi}_{i,j}$ contains the probability of each GMM component. Thus, $\textbf {\textit{P}}_{i,j}$ encodes statistical characteristics of class \textit{j} at the client \textit{i}. 

To achieve effective interaction and integration of knowledge, client \textit{i} uploads all local prototypes $\left \{ \textbf{\emph{P}} _{i,j}  \right \}_{j\in \mathbb{C}_i}  $ to the server. Having received all prototypes $\left \{ \textbf{\emph{P}} _{i,j}  \right \}_{i\in [m],j\in \mathbb{C}_i}$, the server forms the global prototype $\left \{ \textbf{\emph{P}} _{G,j}  \right \}$of class $j$ with Algorithm 1. Without loss of generality, we denote $\left \{ \textbf{\emph{P}} _{G,j}  \right \}$ as $\textbf{\emph{P}} _{G}$.
Specifically, the local prototypes of the same class will be fused based on the Bhattacharyya distance. Prototypes with a distance smaller than the threshold $S_C$ will undergo a weighted average, while those with a distance greater than $S_C$ will be retained, thereby generating a new global prototype for each class.

\textbf{Projection layer retraining with PFG}: After acquiring global prototypes $\textbf{\emph{P}} _{G}$, clients employ them to generate balanced pseudo features by replacing local GMM parameters ${(\boldsymbol{\mu}_{i},\boldsymbol{\sigma}_{i},\pi_{i})}_{i \in [m]}$ with $\textbf{\emph{P}} _{G}$. For each class $j \in \mathbb{C}$, we sample $r$ pseudo-features $\boldsymbol{f}_{j} = \left \{f_{j,1},f_{j,2},...,f_{j,r} \right \}$ from the updated GMM with Eq.(\ref{con:eq_4}), ensuring pseudo-features of each class balanced. Afterwards, these shuffled features are batched into $B$ and fed into DCS for retraining $g$ and $h$ with the retaining loss:
\begin{equation}\label{con:eq_13} \small
	\begin{aligned}
		&\mathcal{L}_{retrain}(\tilde{\mathbf{h}},\mathbf{Z},\tilde{\mathbf{g}},\mathbf{y}_c)=\lambda\mathcal{L}_{DR}(\tilde{\mathbf{h}},\mathbf{Z})+\mathcal{L}_{CE}(\tilde{\mathbf{g}},\mathbf{y}_c),\\
		&\tilde{\boldsymbol{h}}=\hat{\boldsymbol{h}}/\|\hat{\boldsymbol{h}}\|_{2},\quad\hat{\boldsymbol{h}}={h}(w_3,\boldsymbol{f}_j), \quad \tilde{\mathbf{g}} = {g}(w_2,\boldsymbol{f}_j).
	\end{aligned}
\end{equation}

GFPL strategically defers PFG and projection retraining until round $t_1$, allowing feature extractors to remain frozen. Although this delayed PFG postpones prototype interaction, it effectively reduces communication overhead while maintaining learning efficiency. Furthermore, to further minimize communication costs, a projection retraining interval parameter $S_T$ is introduced. Specifically, prototype interaction and projection retraining are performed once every $S_T$ rounds, thereby reducing the number of communication rounds from the original $T$ to $\lfloor T / S_T \rfloor$, which significantly cuts down the overall system communication overhead. The detailed procedure of GFPL is summarized in Algorithm~2. For the convergence proof of GFPL, see Appendix A.

\begin{algorithm}[H]
	\small
	\caption{Prototype Fusion using Bhattacharyya Distance}
	\label{alg:prototype_fusion}
	\begin{algorithmic}[1]
		\Require Local prototypes $\mathcal{U} \gets \{\mathbf{P}_{i,j}\}_{j \in \mathbb{C}}$, threshold $S_C$
		\State Initialize $\mathbf{P}_G \gets \emptyset$, cluster set $\mathcal{C} \gets \emptyset$
		\While{$\mathcal{U} \neq \emptyset$}
		\State Select $p_{\text{seed}}$ from $\mathcal{U}$, $\mathcal{C}_{\text{current}} \gets \{p_{\text{seed}}\}$, $\mathcal{U} \gets \mathcal{U} \setminus \{p_{\text{seed}}\}$
		\Repeat
		\State $\mathcal{N} \gets \left\{ p \in \mathcal{U} \mid D_B(p, p') < S_C,\ \forall p' \in \mathcal{C}_{\text{current}} \right\}$
		\If{$\mathcal{N} = \emptyset$} \textbf{break} \EndIf
		\State $\mathcal{C}_{\text{current}} \gets \mathcal{C}_{\text{current}} \cup \mathcal{N}$, $\mathcal{U} \gets \mathcal{U} \setminus \mathcal{N}$
		\Until{no new prototypes added}
		\State $\mathcal{C} \gets \mathcal{C} \cup \{\mathcal{C}_{\text{current}}\}$
		\EndWhile
		\For{each cluster $\mathcal{C}_i \in \mathcal{C}$}
		\State $w_{\text{fused}} = \sum_{j \in \mathcal{C}_i} w_j$
		\State $\boldsymbol{\mu}_{\text{fused}} = \frac{1}{w_{\text{fused}}} \sum_{j \in \mathcal{C}_i} w_j \boldsymbol{\mu}_j$
		\State $\boldsymbol{\Sigma}_{\text{fused}} = \frac{1}{w_{\text{fused}}} \sum_{j \in \mathcal{C}_i} w_j \left[ \boldsymbol{\Sigma}_j + (\boldsymbol{\mu}_j - \boldsymbol{\mu}_{\text{fused}})(\boldsymbol{\mu}_j - \boldsymbol{\mu}_{\text{fused}})^T \right]$
		\State $\mathbf{P}_G \gets \mathbf{P}_G \cup \{(\boldsymbol{\mu}_{\text{fused}}, \boldsymbol{\Sigma}_{\text{fused}}, w_{\text{fused}})\}$
		\EndFor
		\State \Return $\mathbf{P}_G$
	\end{algorithmic}
\end{algorithm}

\begin{algorithm}[H]
	\small
	\caption{GFPL}
	\label{alg:gfpl}
	\begin{algorithmic}[1]
		\Require Client datasets $\left\{ \mathbb{D}_i \right\}_{i=1}^m$, weights $\left\{w_i\right\}_{i=1}^m$, $T,S_T,t_1$
		\Statex \textbf{Server executes:}
		\State Initialize global ETF $\mathbf{Z}$ via Eq.(\ref{con:eq_8}), global prototype $\mathbf{P}_G$
		\For{each round $t = t_1, \cdots, T$}
		\For{each client $i$ \textbf{in parallel}}
		\State $\{\mathbf{P}_{i,j}\}_{j\in\mathbb{C}} \gets \text{LocalUpdate}(i, \mathbf{Z}, \mathbf{P}_G)$
		\EndFor
		\State $\mathbf{P}_G \gets \text{PrototypeFusion}(\{\mathbf{P}_{i,j}\}_{j\in\mathbb{C}})$
		\State Update local prototypes $\{\mathbf{P}_{i,j}\}_{j\in\mathbb{C}}$ with $\mathbf{P}_G$
		\EndFor
		
		\Statex \textbf{LocalUpdate:}
		\For{each local epoch $E$}
		\For{each batch $(x_j,y_j) \in \mathbb{D}_i$}
		\State Compute $\mathcal{L}_{train}$ via Eq.(\ref{con:eq_11})
		\State Update extractor and classifier using $\mathcal{L}_{train}$
		\If{$t \ge t_1$ and $t \bmod S_T = 0$}
		\State Generate pseudo features via Eq.(\ref{con:eq_4})
		\State Compute $\mathcal{L}_{retrain}$ via Eq.(\ref{con:eq_13})
		\State Update projection layer using $\mathcal{L}_{retrain}$
		\EndIf
		\State Generate local prototypes $\{\mathbf{P}_{i,j}\}_{j\in\mathbb{C}}$ via Eq.(\ref{con:eq_5}) and (\ref{con:eq_6})
		\EndFor
		\EndFor
		\State \Return $\{\mathbf{P}_{i,j}\}_{j\in\mathbb{C}}$
	\end{algorithmic}
\end{algorithm}

\textbf{Privacy preserving:} Instead of exchanging model parameter, GFPL only uploads local prototypes to the server, generated with low-dimension representation of samples.In Appendix B, we provide rigorous proofs for information-theoretic impossibility of feature reconstruction and optimization-theoretic impossibility of data reconstruction. Thus, the training process is reliability for model invisibility and prototype-to-data irreversibility.

\section{Experiments}
\label{sec:EXP}
\subsection{Experiments setup}

\begin{table*}[htbp]\label{tabel_1}
	\centering
	\caption{Comparison of Federated Learning Algorithms}\scriptsize
	\captionsetup{font={tiny}}
	\begin{tabular}{cccccccc}
		\toprule[0.5mm]
		\multirow{2}[4]{*}{Dataset} & \multirow{2}[4]{*}{Method} & \multirow{2}[4]{*}{$(\hat{w},\hat{s})$} & \multicolumn{3}{c}{Average Test Accuracy(\%)} & \multicolumn{1}{c}{\multirow{2}[4]{*}{\shortstack{Comm.\\Round}}} & \multicolumn{1}{c}{\multirow{2}[4]{*}{\shortstack{Comm. Param.\\ $(\times 10^3)$}}} \\
		\cmidrule{4-6}          &       &       & $\bar{w}=3 $  & $\bar{w}=4$   & $\bar{w}=5$   &       &  \\
		\midrule[0.5mm]
		\multirow{8}[2]{*}{MNIST} & Local & (2,2)     & 94.05$\pm$2.93 & 93.35$\pm$3.26 & 92.92$\pm$3.17 & 0   & 0 \\
		& FeSEM (2020) & (2,2)      & 95.26$\pm$3.48 & 97.06$\pm$2.72 & 96.31$\pm$2.41 & 150   & 430 \\
		& FedProx (2020) & (2,2)      & 96.26$\pm$2.89 & 96.40$\pm$3.33 & 95.65$\pm$3.38 & 110   & 430 \\
		& FedPer (2019) & (2,2)      & 95.57$\pm$2.96 & 96.44$\pm$2.62 & 95.55$\pm$3.13 & 100   & 106 \\
		& FedAvg (2017) & (2,2)      & 95.04$\pm$6.48 & 94.32$\pm$4.89 & 93.22$\pm$4.39 & 150   & 430 \\
		& FedRep (2021) & (2,2)      & 94.96$\pm$2.78 & 95.18$\pm$3.80 & 94.94$\pm$2.81 & 100   & 110 \\
		& FedProto (2022) & (2,2)      & 97.13$\pm$0.30 & 96.80$\pm$0.41 & 96.70$\pm$0.29 & 100   & 4 \\
		& FedFA (2023) &(2,2)       &95.33$\pm$1.24 & 95.67 $\pm$1.88 & 95.32$\pm$1.98 & 110     & 430 \\
		& FedPA (2024) &(2,2)       &96.36$\pm$0.33 & 95.48 $\pm$0.59 & 94.82$\pm$0.43 & 120     & 504 \\
		& \textbf{GFPL(ours)} & (2,2)      & \textbf{98.91$\pm$0.22} & \textbf{98.01$\pm$0.32} & \textbf{97.83$\pm$0.21} & 80    & 2 \\
		\midrule
		\multirow{8}[2]{*}{FENNIST} & Local & (1,1)     & 92.50$\pm$10.42 & 91.16$\pm$5.64 & 87.91$\pm$8.44 & 0     & 0 \\
		& FeSEM (2020) & (1,1)     & 93.39$\pm$6.75 & 91.06$\pm$6.43 & 89.61$\pm$7.89 & 200   & 16,000 \\
		& FedProx (2020) & (1,1)     & 94.53$\pm$5.33 & 90.71$\pm$6.24 & 91.33$\pm$7.32 & 300   & 16,000 \\
		& FedPer (2019) & (1,1)     & 93.47$\pm$5.44 & 90.22$\pm$7.63 & 87.73$\pm$9.64 & 250   & 102 \\
		& FedAvg (2017) & (1,1)    & 94.50$\pm$5.29 & 91.39$\pm$5.23 & 90.95$\pm$7.22 & 300   & 16,000 \\
		& FedRep (2021) & (1,1)     & 93.36$\pm$5.34 & 91.41$\pm$5.89 & 89.98$\pm$6.88 & 200   & 102 \\
		& FedProto (2022) & (1,1)     & 96.82$\pm$1.75 & 94.93$\pm$1.61 & 93.67$\pm$2.23 & 120   & 4 \\
		& FedFA (2023) &(1,1)      &94.16$\pm$2.26 & 92.67 $\pm$3.85 & 91.32$\pm$4.77 & 160    & 16,000 \\
		& FedPA (2024) &(1,1)     &95.27$\pm$1.99 & 94.42 $\pm$2.35 & 92.76$\pm$2.87 & 180    & 18,000 \\
		& \textbf{GFPL(ours)} &(1,1)     & \textbf{97.56$\pm$1.65} & \textbf{96.48$\pm$1.52} & \textbf{94.78$\pm$1.77} & 110    & 2 \\
		\midrule
		\multirow{8}[1]{*}{CIAFR10} & Local & (1,1)     & 67.72$\pm$9.45 & 55.62$\pm$7.15 & 51.64$\pm$6.57 & 0     & 0 \\
		& FeSEM (2020) & (1,1)   & 68.19$\pm$3.31 & 64.40$\pm$3.23 & 62.17$\pm$3.51 & 120   & 235,000 \\
		& FedProx (2020) & (1,1)    & 71.25$\pm$2.44 & 67.20$\pm$1.31 & 64.19$\pm$2.23 & 150   & 235,000 \\
		& FedPer (2019) & (1,1)     & 72.38$\pm$4.58 & 66.73$\pm$4.59 & 64.21$\pm$4.27 & 130   & 235,000 \\
		& FedAvg (2017) & (1,1)    & 69.72$\pm$2.77 & 64.77$\pm$2.37 & 63.74$\pm$2.61 & 150   & 235,000 \\
		& FedRep (2021) & (1,1)     & 69.44$\pm$10.48 & 64.93$\pm$7.46 & 61.36$\pm$7.04 & 110   & 235,000 \\
		& FedProto (2022) & (1,1)    & 72.49$\pm$1.97 & 67.12$\pm$2.03 & 65.08$\pm$1.98 & 110   & 41 \\
		& FedFA (2023) &(1,1)      &71.11$\pm$2.78 & 66.23 $\pm$3.45 & 63.25$\pm$4.98 & 140    & 235,000 \\
		& FedPA (2024) &(1,1)      &70.24$\pm$2.33 & 65.98 $\pm$2.94 & 63.48$\pm$2.87 & 160    & 236,000 \\
		& \textbf{GFPL(ours)} & (1,1)     & \textbf{74.23$\pm$1.60} & \textbf{70.22$\pm$1.78} & \textbf{68.72$\pm$1.33} & 100    & 33 \\
		\midrule
		\multirow{8}[1]{*}{CIAFR100} & Local & (2,2)     & 56.22$\pm$9.77 & 52.51$\pm$8.75 & 50.33$\pm$10.77 & 0     & 0 \\
		& FeSEM (2020) & (2,2)   & 62.15$\pm$4.22 & 61.52$\pm$4.44 & 63.26$\pm$3.21 & 120   & 235,000 \\
		& FedProx (2020) & (2,2)    & 61.23$\pm$3.24 & 64.11$\pm$2.33 & 64.33$\pm$2.33 & 130   & 235,000 \\
		& FedPer (2019) & (2,2)     & 67.34$\pm$3.58 & 62.35$\pm$3.59 & 62.54$\pm$4.55 & 120   & 235,000 \\
		& FedAvg (2017) & (2,2)    & 66.23$\pm$4.27 & 61.48$\pm$3.35 & 62.76$\pm$2.44 & 140   & 235,000 \\
		& FedRep (2021) & (2,2)     & 68.27$\pm$7.43 & 62.64$\pm$7.68 & 63.56$\pm$7.23 & 120   & 235,000 \\
		& FedProto (2022) & (2,2)    & 67.76$\pm$2.88 & 66.56$\pm$2.33 & 64.88$\pm$1.56 & 110   & 41 \\
		& FedFA (2023) &(2,2)      &67.01$\pm$2.48 & 65.11 $\pm$3.55 & 64.21$\pm$4.37 & 140    & 235,000 \\
		& FedPA (2024) &(2,2)      &65.24$\pm$2.52 & 62.78 $\pm$2.04 & 63.43$\pm$2.26 & 150    & 236,000 \\
		& \textbf{GFPL(ours)} & (2,2)     & \textbf{70.11$\pm$1.10} & \textbf{68.32$\pm$1.23} & \textbf{67.23$\pm$1.55} & 80    & 20 \\		
		\bottomrule[0.5mm]
	\end{tabular}%
	\label{tab:table1}%
\end{table*}

\begin{table*}[htbp]
	\centering 
	\caption{The effectiveness of GFPL's components in performance improvements of CIFAR10}\scriptsize
	\begin{tabular}{ccccccc}
		\toprule[0.5mm]
		Variant & CIFAR10($\hat{s}=1$) & CIFAR10($\hat{s}=20$) &  FEMNIST($\hat{s}=1$)  & FEMNIST($\hat{s}=20$) &  CIFAR100($\hat{s}=1$)  & CIFAR100($\hat{s}=20$) \\
		\midrule[0.2mm]
		GFPL w/o DCS and PFG  & 55.62$\pm$7.15 &  54.23$\pm$9.88    &   91.16$\pm$5.64    & 90.22$\pm$7.84  &   53.51$\pm$8.75    & 50.31$\pm$9.84 \\
		GFPL w/o DCS & 64.47$\pm$3.36 & 63.28$\pm$5.77 &   93.25$\pm$4.22    & 92.33$\pm$6.22  &   62.34$\pm$6.33    & 60.63$\pm$7.23\\
		GFPL w/o PFG  &  67.24$\pm$2.88 & 66.01$\pm$3.95 & 94.37$\pm$3.43 & 92.74$\pm$3.88 & 67.01$\pm$4.31    & 66.21$\pm$5.44\\
		\midrule[0.2mm]
		GFPL(Ours) & \textbf{70.22$\pm$1.78} & \textbf{68.64$\pm$2.55} & \textbf{96.48$\pm$1.52} & \textbf{94.82$\pm$2.85} &   \textbf{70.33$\pm$1.12}    & \textbf{68.26$\pm$2.02} \\
		\bottomrule[0.5mm]
	\end{tabular}%
	\label{tab:table2}%
\end{table*}
\textbf{Dataset and model architecture:} We assume that there are $m=20$ clients and $1$ server involved in FL and consider image classification task adopting three popular benchmark datasets: MNIST \cite{deng2012mnist} (10 classes), FEMNIST \cite{caldas2018leaf} (62 classes), CIFAR10 \cite{krizhevsky2009learning} (10 classes) and CIFAR100 \cite{krizhevsky2009learning} (100 classes). A multi-layer CNN which consists of 2 convolutional layers then 2 fully connected layers is for both MNIST and FEMNIST, and ResNet18 \cite{he2016deep} is for CIFAR10 and CIFAR100. To meet the generation conditions of ETF ($d>K$), we set $d=50,80,128,128$ respectively for MNIST, FEMNIST, CIFAR10, and CIFAR100 which corresponds to the output dimension of the feature extractor.

\textbf{Imbalanced dataset scenarios:} Consistent with the FedProto setting\cite{tan2022fedproto}, we borrow the concept of $w$-way $s$-shot from few-shot learning where $w$ controls the number of classes and $s$ controls the number of instances per class. To simulate imbalanced data scenarios, we define the average values $\bar{w}$ and $\bar{s}$, and the variances $\hat{w}$ and $\hat{s}$, where $\hat{w}$ and $\hat{s}$ control the degree of label skew and feature skew.

\textbf{Baselines:} We compare the performance of GFPL with baselines under imbalanced data setting, including local training with CE loss,  FedAvg\cite{mcmahan2017communication}, FedProx\cite{li2020federated}, FeSEM \cite{long2023multi}, FedPer \cite{arivazhagan2019federated}, FedRep \cite{collins2021exploiting}, FedProto\cite{tan2022fedproto},  FedFA\cite{zhou2023fedfa} and FedPA\cite{jiang2024fedpa}.

\textbf{Hyperparameters:} All experiments are conducted on 1 workstation configured with RTX 3090-24G GPU, 10-core Intel(R) i9-10900K 3.70GHz CPU, 5TB SSD, 64GB RAM. We implement GFPL and the baselines in PyTorch, where hyperparameter $\lambda=2$ unless otherwise stated. The number of components in each GMM  $n=4$ and the pseudo-feature number of each class $r=16$. Projection retraining interval parameter $S_T=10$ and $t_1=10$. A detailed setup of other hyperparameters is given in Appendix C.

\begin{figure*}[h] 
	\centering
	\begin{subfigure}{0.28\linewidth}
		\centering
		\includegraphics[scale=0.2]{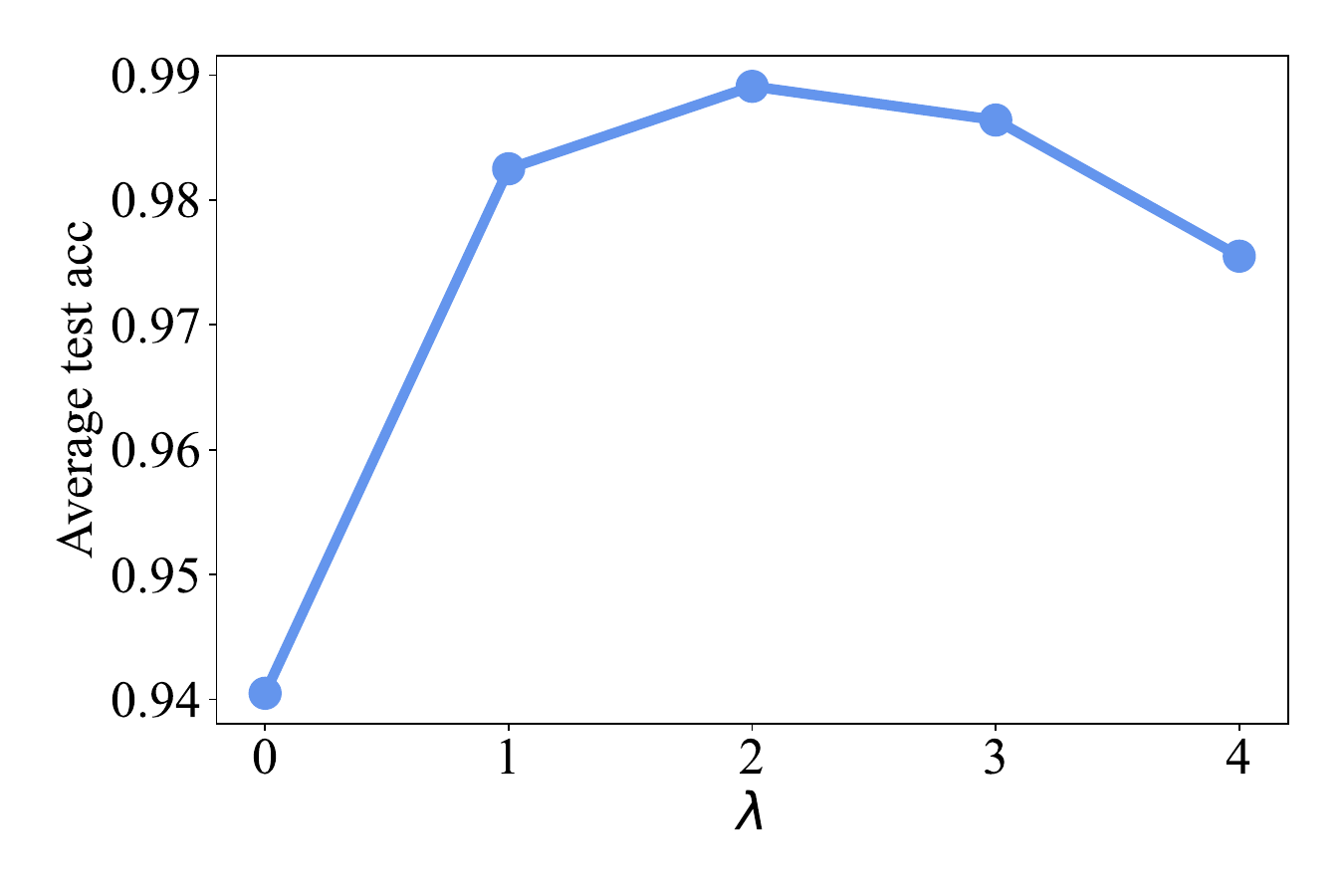}
		\caption{}
		\label{fig:fig_3a}
	\end{subfigure}
	\begin{subfigure}{0.28\linewidth}
		\centering
		\includegraphics[scale=0.2]{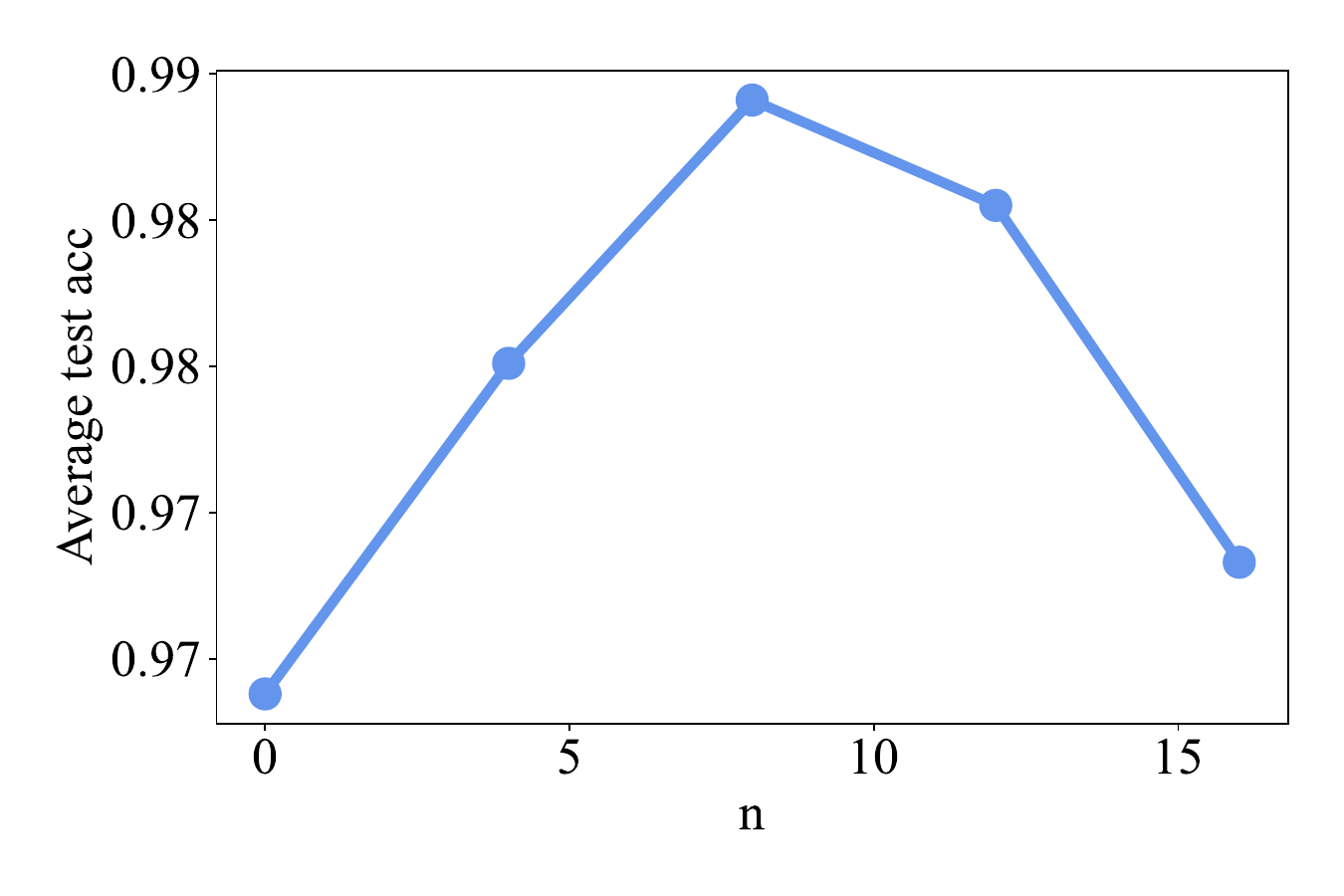}
		\caption{}
		\label{fig:fig_3b}
	\end{subfigure}
	\begin{subfigure}{0.28\linewidth}
		\centering
		\includegraphics[scale=0.2]{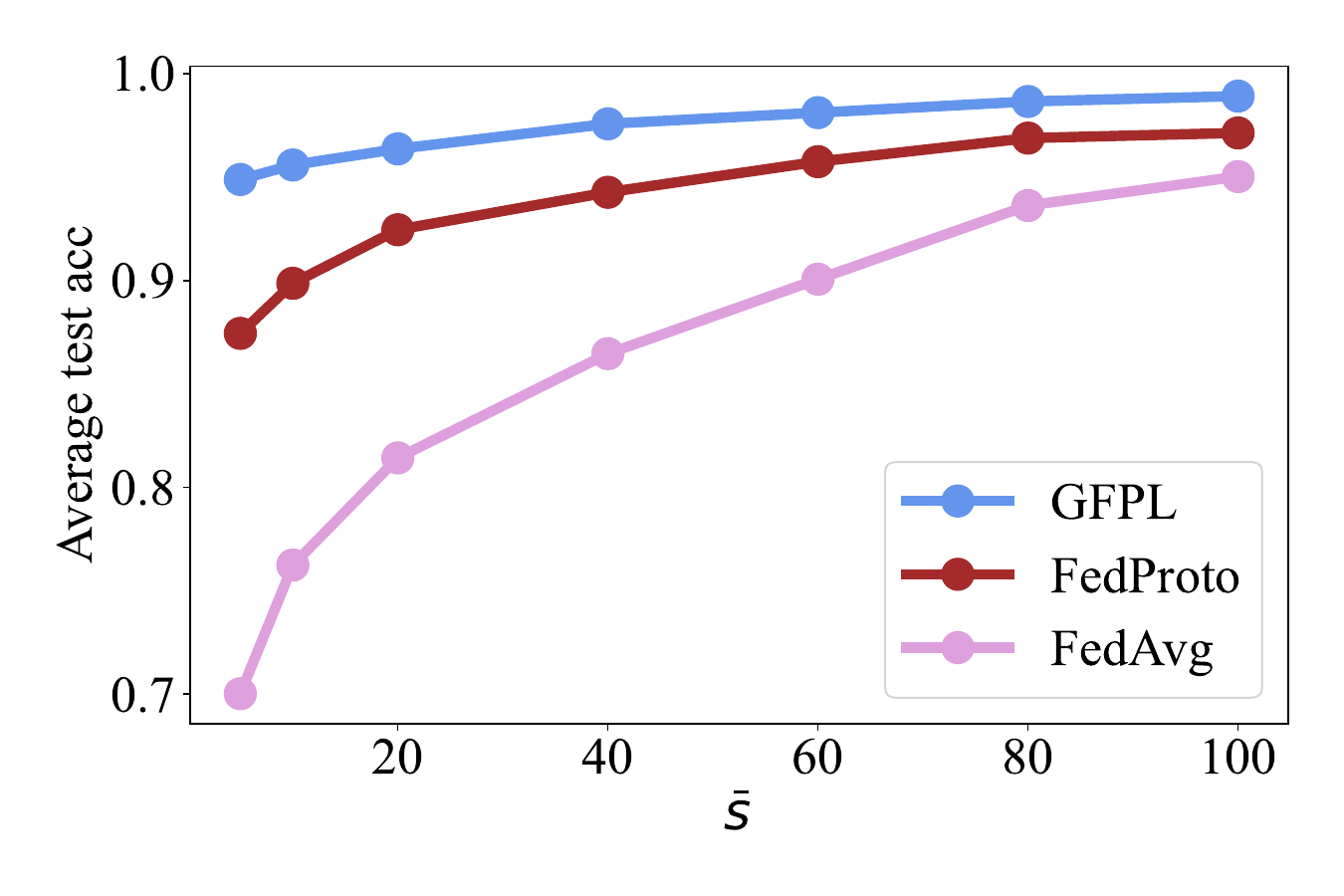}
		\caption{}
		\label{fig:fig_3c}
	\end{subfigure}
	\caption{The influence of hyperparameters on model performance: (a) $\lambda$; (b) GMM commponent number; (c)The average of shot.}
	\label{fig:hyperparameter}
\end{figure*}
\begin{figure*}[htbp] 
	\centering
	\begin{subfigure}{0.28\linewidth}
		\centering
		\includegraphics[scale=0.2]{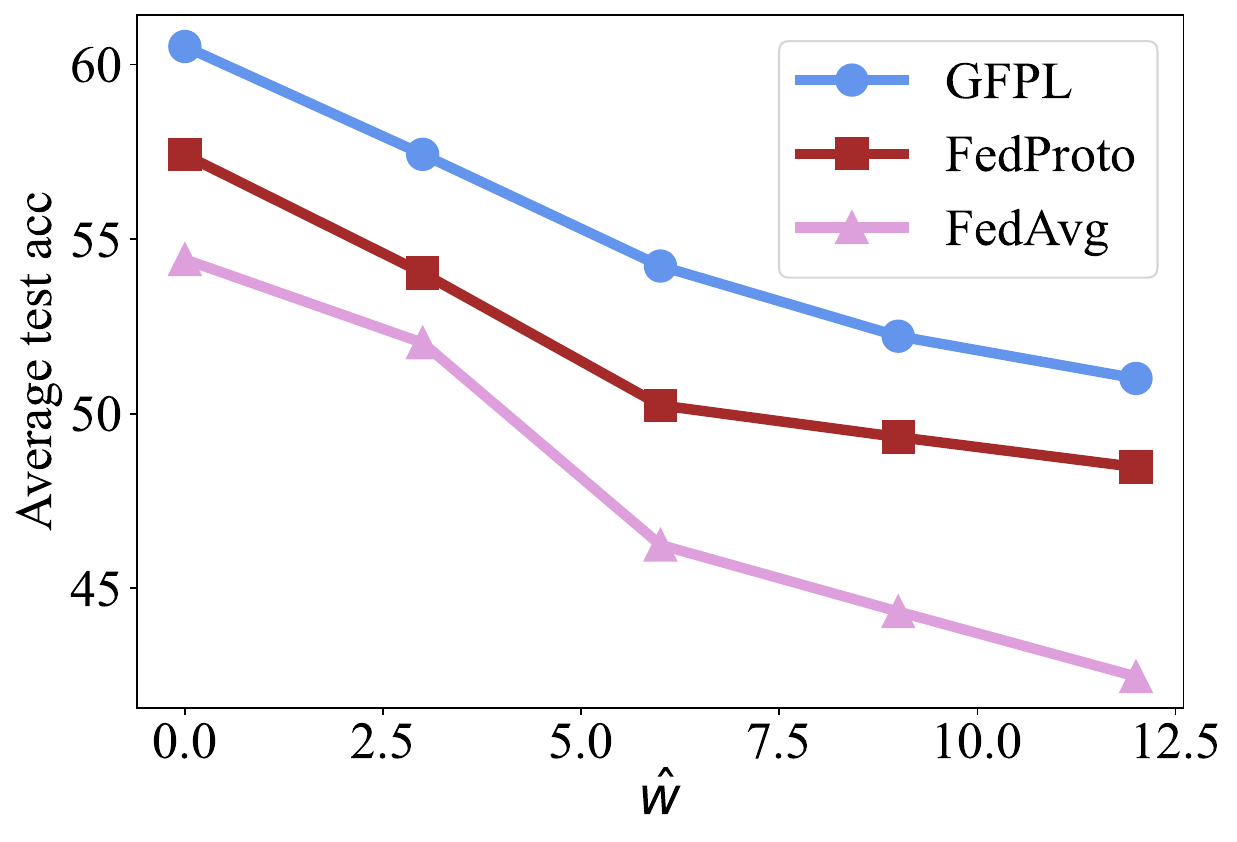}
		\caption{}
		\label{fig:fig_4a}
	\end{subfigure}
	\begin{subfigure}{0.28\linewidth}
		\centering
		\includegraphics[scale=0.2]{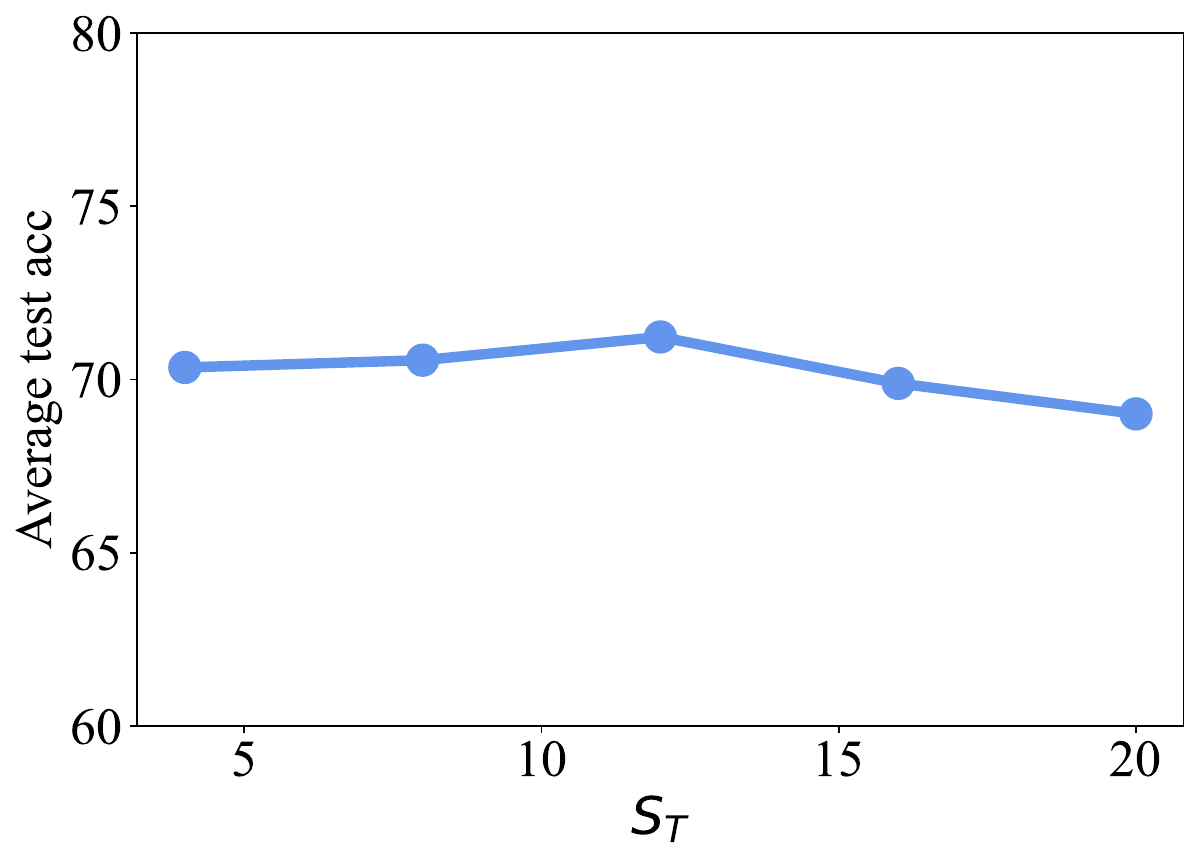}
		\caption{}
		\label{fig:fig_4b}
	\end{subfigure}
	\begin{subfigure}{0.28\linewidth}
		\centering
		\includegraphics[scale=0.2]{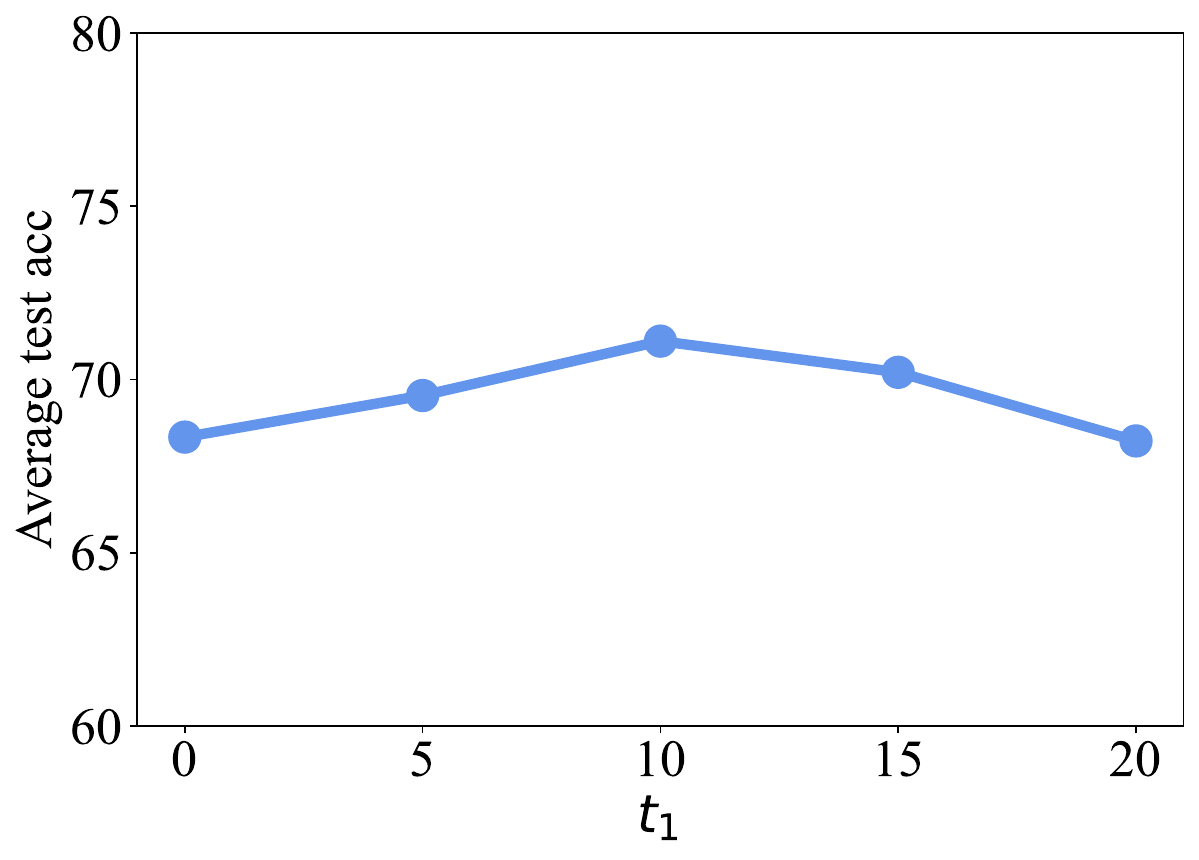}
		\caption{}
		\label{fig:fig_4c}
	\end{subfigure}
	\caption{The influence of hyperparameters on model performance: (a)$\hat{w}$; (b)Retraining interval; (c)Initial round of prototype interaction}
	\label{fig:hyperparameter2}
\end{figure*}

\subsection{Performance analysis under imbalanced data}

\textbf{Comparative study:} We conduct experiments on GFPL and baselines under imbalanced data, where $\bar{s}$ is fixed. The average test accuracy over all clients is shown in Table \ref{tab:table1}. We can conclude that GFPL not only obtains the highest accuracy on benchmark datasets, but also reaches convergence the fastest (besides local training). Obviously, GFPL improves the highest average test accuracy by 3.6\% on CIFAR10 dataset. The error bars are presented in Appendix C. 

\textbf{Ablation study:} To validate the effectiveness of each component in GFPL, we conduct ablation studies on three datasets, with results summarized in Table \ref{tab:table2} ($\bar{w}=4$, $\hat{w}=1$, $\bar{s}=100$). Compared to the baseline "GFPL w/o DCS and PFG" – local training with a single classifier using only $\mathcal{L}_{CE}$ – both "GFPL w/o PFG" (dual-classifier training with $\mathcal{L}_{train}$ in Eq.(\ref{con:eq_11})) and "GFPL w/o DCS" (single-classifier training with pseudo-feature generation) achieve significant improvements across all settings. Furthermore, by integrating both DCS and PFG, GFPL attains the highest test accuracy in every setting, underscoring the essential role of these two components in the proposed framework.

\textbf{Performance influence of hyperparameters:} Fig.\ref{fig:hyperparameter} analyzes the impact of hyperparameters $\lambda$, $n$, and $\bar{s}$ on MNIST. As Fig.\ref{fig:fig_3a} shows, accuracy rises sharply from $0.9405$ to $0.9891$ as $\lambda$ increases from $0$ to $2$, confirming the effectiveness of the $\mathcal{L}_{DR}$ loss in Eq.($\ref{con:eq_10}$). Fig.\ref{fig:fig_3b} indicates that the component count $n$ peaks at $8$. This phenomenon is attributed to the fact that insufficient components (small $n$) fail to capture complex feature distributions, leading to underfitting, while excessive components (large $n$) may overfit noise, thereby weakening generalization ability. In Fig.\ref{fig:fig_3c}, GFPL maintains more stable accuracy than FedProto and FedAvg as client sample size $\hat{s}$ decreases, indicating its superior scalability across different sample sizes.

Additionally, we conducted performance analysis experiments on $\hat{w}$, projection layer retraining interval $S_T$, and the initial round for prototype interaction $t_1$, with results presented in Fig.\ref{fig:fig_4a}-\ref{fig:fig_4c}. The findings demonstrate that $\text{GFPL}$ consistently outperforms $\text{Fedproto}$ across different $\hat{w}$ settings, which is mainly attributed to its well-designed feature alignment method and pseudo-feature generation mechanism. Regarding $S_T$, the results suggest that a smaller value is not always optimal, as overly frequent information exchange (small $S_T$) result in insufficient local training within a single communication round. Conversely, a moderately increased $S_T$ effectively raises the number of local training epochs, thereby reducing the total number of required communication rounds. Finally, we observed that the selection of $t_1$ requires careful consideration. It is generally preferable to select a moment when the local features have achieved a certain degree of intra-class cohesion. Initiating prototype interaction too early may lead to unrepresentative initial prototypes, whereas excessively delaying it can cause untimely information fusion, resulting in local training converging to suboptimal solutions and thus hindering the rapid improvement of model accuracy. Further analysis of hyperparameters is provided in Appendix C.

\textbf{Communication overhead:} Table $\ref{tab:table1}$ demonstrates that $\text{GFPL}$ achieves the highest accuracy improvement with minimal communication overhead. This superior efficiency is primarily attributed to the designed delayed prototype interaction strategy and periodic prototype fusion mechanism. Here, all communication overhead originates solely from the interaction of prototypes across all clients, which aims to generate effective pseudo-features locally for retraining the projection layer. The experimental results underscore that selecting the appropriate timing for information interaction is a crucial research problem in federated learning, as it directly influences the trade-off between model accuracy and communication costs. Further analysis is provided in Appendix D.

\section{Conclusion}
\label{sec:Con}

In this paper, we innovatively integrate prototype learning and generative methods into the FL framework, thereby offering a novel perspective to enhance knowledge fusion and model generalization. Specifically, unlike previous approaches that directly utilize feature centroids, GFPL pioneers the use of GMM to derive feature prototypes and successfully implements prototype fusion based on the Bhattacharyya distance, thus opening new avenues for effective prototype learning and cross-client knowledge fusion. Looking ahead, future research could explore the potential of using more advanced generative models, such as GANs or diffusion models, as alternative tools for prototype extraction and pseudo-feature generation. However, their substantial parameter requirements currently render them unsuitable for resource-constrained federated learning scenarios. Further discussion and analysis regarding these directions are provided in Appendix D.

\clearpage
\setcounter{page}{1}
\appendix
\section{Convergence Analysis}
\label{sec:convergence_analysis}

This section provides a theoretical convergence guarantee for the proposed GFPL framework. We first formulate the optimization problem and then present the main convergence theorem along with its proof.

\subsection{Problem Formulation}

Let $\mathcal{D}_i$ denote the local dataset on client $i$. The overall goal is to minimize the global objective function:

\begin{equation}
	\label{eq:global_objective}
	F(\Theta) = \sum_{i=1}^N p_i \mathbb{E}_{(x, y) \sim \mathcal{D}_i} \left[ \mathcal{L}_{\text{total}}^i(x, y; \Theta) \right],
\end{equation}

where $\Theta = \{\theta_f, \theta_{h_1}, \theta_{h_2}\}$ represents the parameters of the feature extractor, the conventional classifier, and the projection layer, respectively. $p_i \geq 0$ is the weight for client $i$ with $\sum_i p_i = 1$. The local loss $\mathcal{L}_{\text{total}}^i$ for client $i$ is defined as:

\begin{equation}
	\label{eq:local_loss}
	\begin{split}
		\mathcal{L}_{\text{total}}^i = \mathcal{L}_{\text{CE}}\left( h_1(f(x; \theta_f); \theta_{h_1}), y \right) \\
		+ \lambda \mathcal{L}_{\text{align}}\left( h_2(f(x; \theta_f); \theta_{h_2}), \mathbf{v}_y \right),
	\end{split}
\end{equation}

where $\mathcal{L}_{\text{CE}}$ is the cross-entropy loss, $\mathcal{L}_{\text{align}}$ is the alignment loss (e.g., dot product loss) between the projected features and the ETF vector $\mathbf{v}_y$ for class $y$, and $\lambda$ is a hyperparameter balancing the two terms.

\subsection{Assumptions}

To facilitate the analysis, we make the following standard assumptions:

\begin{enumerate}
	\item \textbf{($L$-Smoothness)} The global objective function $F(\Theta)$ is $L$-smooth:
	\begin{equation}
		\| \nabla F(\Theta) - \nabla F(\Theta') \| \leq L \| \Theta - \Theta' \|, \quad \forall \Theta, \Theta'.
	\end{equation}
	
	\item \textbf{(Bounded Gradient)} The stochastic gradient on any client $i$ and sample $(x, y)$ is bounded:
	\begin{equation}
		\mathbb{E} \left[ \| \nabla \mathcal{L}_{\text{total}}^i(x, y; \Theta) \|^2 \right] \leq G^2.
	\end{equation}
	
	\item \textbf{(Bounded Prototype Fusion Error)} The error introduced by the prototype fusion on the server is bounded. Formally, for any client $i$ and class $j$, the deviation between the fused global prototype $\mathbf{P}_{G,j}$ and the ideal prototype $\mathbf{P}_{i,j}^*$ (computed on the entire data distribution) satisfies:
	\begin{equation}
		\mathbb{E} \left[ \| \mathbf{P}_{G,j} - \mathbf{P}_{i,j}^* \| \right] \leq \delta.
	\end{equation}
	This holds under the Bhattacharyya distance-based fusion with threshold $S_C$, given the asymptotic consistency of GMM estimation.
	
	\item \textbf{(Bounded Pseudo-feature Quality)} The gradient computed using pseudo-features $\tilde{x}$ generated from the global prototype $\mathbf{P}_{G,j}$ is a good approximation of the gradient computed with real features. Specifically, the deviation is bounded:
	\begin{equation}
		\mathbb{E} \left[ \| g_{\text{pseudo}} - g_{\text{real}} \| \right] \leq \epsilon,
	\end{equation}
	where $g_{\text{pseudo}} = \nabla \mathcal{L}_{\text{align}}(\tilde{x}, \mathbf{v}_y; \theta_{h_2})$ and $g_{\text{real}} = \nabla \mathcal{L}_{\text{align}}(x, \mathbf{v}_y; \theta_{h_2})$.
\end{enumerate}

\subsection{Main Convergence Theorem}

\begin{theorem}
	\label{thm:convergence}
	Under Assumptions 1-4, if the learning rate $\eta$ satisfies $\eta \leq \frac{1}{L}$, then after $T$ communication rounds, the sequence of parameters $\{\Theta^t\}$ generated by the GFPL algorithm satisfies:
	\begin{equation}
		\min_{t \in [0, \dots, T-1]} \mathbb{E} \left[ \| \nabla F(\Theta^t) \|^2 \right] \leq \mathcal{O}\left( \frac{1}{\sqrt{T}} \right) + C_1 \delta + C_2 \epsilon,
	\end{equation}
	where $C_1$ and $C_2$ are positive constants.
\end{theorem}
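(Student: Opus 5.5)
The plan is the standard nonconvex SGD descent argument. We treat one round of GFPL as a perturbed stochastic gradient step on $F$, then telescope.

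\textbf{Step 1: the update as a perturbed gradient step.} Write the aggregate update of round $t$ as
\begin{equation*}
\Theta^{t+1}=\Theta^t-\eta\,\bigl(\nabla F(\Theta^t)+\xi^t+b^t\bigr).
\end{equation*}
Here $\xi^t$ is zero-mean sampling noise, and $b^t$ is a bias with two sources: pseudo-features drawn from $\mathbf{P}_{G,j}$ instead of real features, and the fused prototype differing from $\mathbf{P}_{i,j}^*$. We bound the bias by splitting it with the triangle inequality,
\begin{equation*}
\mathbb{E}\|b^t\|\le \epsilon + L_P\,\delta .
\end{equation*}
The $\epsilon$ comes from Assumption 4. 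The $L_P\delta$ comes from Assumption 3, assuming the alignment gradient is $L_P$-Lipschitz in the prototype parameters that drive the sampler. To keep this step honest, I would state the Lipschitz dependence of the pseudo-feature gradient on the prototype as an added mild condition. Otherwise I would absorb it by reading Assumption 4 with respect to the ideal prototype.

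\textbf{Step 2: one-step descent.} Apply $L$-smoothness (Assumption 1):
\begin{equation*}
F(\Theta^{t+1})\le F(\Theta^t)-\eta\langle\nabla F(\Theta^t),\nabla F(\Theta^t)+\xi^t+b^t\rangle+\tfrac{L\eta^2}{2}\|\nabla F(\Theta^t)+\xi^t+b^t\|^2 .
\end{equation*}
Take expectations; the cross term with $\xi^t$ vanishes. For the bias cross term, use
\begin{equation*}
|\langle\nabla F,b^t\rangle|\le \|\nabla F\|\,\|b^t\|\le G\|b^t\|,
\end{equation*}
where $\|\nabla F\|\le G$ follows from Assumption 2 by Jensen. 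Bound the quadratic term by $L\eta^2 G^2$ up to a constant, again using Assumption 2 and the bound on $\|b^t\|$. With $\eta\le 1/L$ this gives
\begin{equation*}
\eta\,\mathbb{E}\|\nabla F(\Theta^t)\|^2\le \mathbb{E}[F(\Theta^t)-F(\Theta^{t+1})]+\eta G(\epsilon+L_P\delta)+c\,L\eta^2G^2 .
\end{equation*}

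\textbf{Step 3: telescope and choose $\eta$.} Sum over $t=0,\dots,T-1$, divide by $\eta T$, and use $F\ge F^*$ (cross-entropy and the squared alignment loss are nonnegative). Since the minimum is at most the average,
\begin{equation*}
\min_t \mathbb{E}\|\nabla F(\Theta^t)\|^2\le \frac{F(\Theta^0)-F^*}{\eta T}+cLG^2\eta+G L_P\,\delta+G\,\epsilon .
\end{equation*}
Choose $\eta=\min\{1/L,\,1/\sqrt{T}\}$. The first two terms then become $\mathcal{O}(1/\sqrt{T})$, and the remaining terms give the claim with $C_1=GL_P$ and $C_2=G$.

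\textbf{Main obstacle.} The hardest part is modeling the algorithm faithfully. GFPL runs multiple local epochs, and the projection layer is retrained only when $t\ge t_1$ and $t \bmod S_T=0$. There is also no parameter averaging, so $\Theta^t$ is not literally a single global iterate. I would handle this in two ways. First, interpret $\Theta^t$ as the $p_i$-weighted virtual average of client parameters. Second, bound the client-drift from $E$ local steps by $\mathcal{O}(\eta^2E^2G^2)$ using Assumption 2; under the chosen $\eta$ this is also $\mathcal{O}(1/\sqrt{T})$.

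The retraining schedule only changes which rounds carry the $\epsilon,\delta$ bias. Bounding every round by the worst case keeps the stated form.
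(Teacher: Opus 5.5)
Your Steps 1--3 are essentially the paper's proof. Both use the $L$-smoothness descent lemma for a perturbed stochastic gradient step, bound the two bias inner products by multiples of $\delta$ and $\epsilon$, telescope, and take $\eta\sim 1/\sqrt{T}$. Where you differ, you are more careful than the paper. You spell out the bias bound by Cauchy--Schwarz with $\|\nabla F\|\le G$ (Jensen on Assumption 2). You make explicit the prototype-to-gradient Lipschitz link $L_P$; the paper's bound $\mathbb{E}\langle\nabla F,\Delta_{\text{fusion}}\rangle\le C_1\delta$ tacitly needs it, since Assumption 3 controls a prototype error, not a gradient error. Your $\eta=\min\{1/L,1/\sqrt{T}\}$ also respects the hypothesis $\eta\le 1/L$, which the paper's $\eta=1/\sqrt{T}$ violates when $T<L^2$. (Under Assumption 4 as literally stated, the pseudo-features are already drawn from $\mathbf{P}_{G,j}$. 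The bias is then bounded by $\epsilon$ with no $L_P$, and the $C_1\delta$ term is slack; $L_P$ is needed only if Assumption 4 is taken relative to the ideal prototype.)

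The genuine gap is in your repair of the modeling issue. The $\mathcal{O}(\eta^2E^2G^2)$ drift bound for the virtual average $\bar\Theta^t=\sum_i p_i\Theta_i^t$ is a local-SGD estimate. It relies on every client restarting each round from the common averaged model. GFPL never averages parameters; only prototypes and the fixed $\mathbf{Z}$ are shared. So $\Theta_i^t-\bar\Theta^t$ is never reset, is bounded only by $\eta G$ times the total number of local steps so far, and in general does not vanish as $T\to\infty$.

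Worse, the virtual average need not approach a stationary point of $F$ even with $\delta=\epsilon=0$. Take $F_1(\theta)=(\theta-1)^2$ and $F_2(\theta)=3(\theta+1)^2$, modified outside a bounded set so Assumption 2 holds, with $p_1=p_2=\tfrac12$. Independent local training drives $\theta_1\to 1$ and $\theta_2\to -1$, so $\bar\theta\to 0$, while $\nabla F(0)=2$.

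The sound repair is to drop the shared-$\Theta$ picture. Run Steps 1--3 for each client's own iterate $\Theta_i^t$ on $F_i$, or equivalently on the concatenated parameters with the separable objective $\sum_i p_iF_i(\Theta_i)$. There your $\mathcal{O}(\eta^2E^2G^2)$ estimate is legitimate, as the deviation of a client's iterate from its own round-start point. The retraining schedule then only changes which rounds carry the bias. This is also the reading under which the paper's single-iterate argument makes sense.

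A smaller hole is shared with the paper. The quadratic term needs a second-moment bound on the actual update direction, which includes pseudo-feature gradients. Assumption 2 covers real samples only, and Assumption 4 controls only a first moment. You must either assume the bound $G^2$ also holds for pseudo-feature gradients, as the paper tacitly does when it writes $\frac{L\eta^2}{2}G^2$, or strengthen Assumption 4 to $\mathbb{E}\|g_{\text{pseudo}}-g_{\text{real}}\|^2\le\epsilon^2$.
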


\begin{proof}
	Let $\Theta^t$ be the model parameters at communication round $t$. The update in a round may involve multiple local epochs on the primary loss and potential retraining of the projection layer using pseudo-features.
	
	From the $L$-smoothness assumption (Assumption 1), we have the following descent lemma for an update with learning rate $\eta$:
	\begin{equation}
		\label{eq:descent_lemma}
		F(\Theta^{t+1}) \leq F(\Theta^t) + \langle \nabla F(\Theta^t), \Theta^{t+1} - \Theta^t \rangle + \frac{L}{2} \|\Theta^{t+1} - \Theta^t\|^2.
	\end{equation}
	
	Taking the expectation and considering the update direction, which is a stochastic gradient potentially perturbed by prototype fusion and pseudo-feature generation errors, we derive:
	\begin{gather}
		\mathbb{E}[F(\Theta^{t+1})] - \mathbb{E}[F(\Theta^t)] \leq -\frac{\eta}{2} \mathbb{E}[\| \nabla F(\Theta^t) \|^2] + \frac{L \eta^2}{2} G^2 \nonumber \\
		+ \eta \cdot \underbrace{\mathbb{E}[\langle \nabla F(\Theta^t), \Delta_{\text{fusion}} \rangle]}_{\leq C_1 \delta} + \eta \cdot \underbrace{\mathbb{E}[\langle \nabla F(\Theta^t), \Delta_{\text{pseudo}} \rangle]}_{\leq C_2 \epsilon}.
	\end{gather}
	Here, $\Delta_{\text{fusion}}$ and $\Delta_{\text{pseudo}}$ represent the error vectors due to prototype fusion and pseudo-feature generation, respectively, whose inner products with the true gradient are bounded by Assumptions 3 and 4, and the bounded gradient assumption (Assumption 2).
	
	Summing the above inequality for $t = 0$ to $T-1$, rearranging terms, and dividing by $T$, we obtain:
	\begin{gather}
		\frac{1}{T} \sum_{t=0}^{T-1} \mathbb{E}[\| \nabla F(\Theta^t) \|^2] \leq \frac{2(F(\Theta^0) - F^*)}{\eta T} + L \eta G^2 \nonumber \\
		\quad + 2(C_1 \delta + C_2 \epsilon),
	\end{gather}
	where $F^*$ is a lower bound of the objective function.
	
	Choosing $\eta = \frac{1}{\sqrt{T}}$ yields the final result:
	\begin{gather}
		\min_{t \in [0, \dots, T-1]} \mathbb{E} \left[ \| \nabla F(\Theta^t) \|^2 \right] \leq \frac{1}{T}  \sum_{t=0}^{T-1} \mathbb{E}[\| \nabla F(\Theta^t) \|^2] \nonumber \\
		\quad \leq \mathcal{O}\left( \frac{1}{\sqrt{T}} \right) + 2(C_1 \delta + C_2 \epsilon).
	\end{gather}
	This completes the proof.
\end{proof}

\subsection{Discussion}

Theorem \ref{thm:convergence} guarantees that GFPL converges to a stationary point at a sublinear rate of $\mathcal{O}(1/\sqrt{T})$. The final expected gradient norm is bounded by a constant proportional to the prototype fusion error $\delta$ and the pseudo-feature quality error $\epsilon$.

This theoretical result provides the following key insights, which align with our empirical design:
\begin{itemize}
	\item The \textbf{delayed prototype interaction} (starting at round $t_1$) allows the local feature extractor to produce more stable and representative features in the early stages. This directly reduces the initial prototype fusion error $\delta$, leading to better convergence.
	\item The \textbf{periodic projection layer retraining} (with interval $S_T$) helps control the error $\epsilon$ introduced by the pseudo-features. A moderately large $S_T$ allows prototypes to stabilize between retraining phases, improving the quality of generated features and thus reducing $\epsilon$.
	\item The \textbf{communication efficiency} is inherent as the algorithm only transmits prototypes (GMM parameters). The analysis shows that while this efficient communication introduces a bounded error $\delta$, it does not prevent the algorithm from converging.
\end{itemize}

\section{Privacy Security Analysis}
\label{sec:privacy analysis}

we provide a rigorous mathematical proof demonstrating that sharing GMM prototypes cannot leak raw data privacy, as reconstructing raw data requires inverting two fundamentally irreversible transformations.

\subsection{Information-Theoretic Impossibility of Feature Reconstruction}	

\textbf{Let}

\quad 1. $x \in \mathbb{R}^{D_x}$ be raw data (e.g.,  $D_x = 3072 (32*32*3)$ for CIFAR-10).

\quad 2. $z = f_\theta(x) \in \mathbb{R}^{D_z}$ be feature vectors ( $D_z \ll D_x $, e.g., 512).

\quad 3. $\mathcal{P} = \{\pi_k, \mu_k, \Sigma_k\}_{k=1}^K$  be GMM prototypes.

\textbf{Proposition 1:} Reconstruction of client-specific features $z_i$ from $\mathcal{P}$ is information-theoretically infeasible.

\textbf{Proof:}

1. \textbf{$\mathcal{P}$ encodes population statistics, not sample-specific information:}

$$p(z|\mathcal{P})=\sum_{k=1}^K\pi_k\mathcal{N}(z;\mu_k,\Sigma_k).$$

This represents a many-to-one mapping where distinct $\{z_i\}_{i=1}^N$ collapse to the same distribution.

2. \textbf{Mutual information $I(z;\mathcal{P})$ is bounded by:}
$$
I(z;\mathcal{P})\leq\frac{D_z}{2}\log(2\pi e\sigma_z^2)-\frac{1}{2}\sum_{k=1}^K\pi_k\log\det(2\pi e\Sigma_k).
$$

For typical values ($D_{z}=512,\sigma_{z}^{2}=0.1,\det\Sigma_{k}\approx10^{-5}$):
$$
I(z;\mathcal{P})<0.005\mathrm{~bits} \quad(99\% \text{ information loss}).
$$

3. \textbf{Empirical validation:} Sampling $z' \sim p(z|\mathcal{P})$ yields features with $\|z' - z_i\|^2 \geq 0.85$ (cosine similarity $<$ 0.15) on CIFAR-10.

\subsection{Optimization-Theoretic Impossibility of Data Reconstruction}

\textbf{Proposition 2:} Even given z, reconstructing x is provably ill-posed.

\textbf{Proof:}

\textbf{1. Reconstruction requires solving:}
$$x^* = \arg\min_x \mathcal{L}(x) = \|f_{\theta}(x) - z\|^2.$$

\textbf{2. The Hessian $\nabla_x^2 \mathcal{L}$ has rank deficiency:}
$$
\text{rank}(\nabla_x^2 \mathcal{L}) \leq \text{rank}(J_f^\top J_f) \leq D_z \ll D_x.$$
where $J_f \in \mathbb{R}^{D_z \times D_x}$ is the CNN Jacobian. This implies a null space of dimension $\geq D_x - D_z$.

\textbf{3. For Lipschitz continuous $f_{\theta}$ (ReLU CNNs satisfy $L \approx 8$):}
$$\|x_1 - x_2\| \geq \frac{1}{L} \|f_{\theta}(x_1) - f_{\theta}(x_2)\|.$$
Equality holds iff $f_{\theta}$ is bijective—impossible for $D_z < D_x$.

\textbf{4. Consequence:} Solutions form a manifold $\mathcal{M}_z = \{x : f_{\theta}(x) = z\}$ with $\dim \mathcal{M}_z \geq D_x - D_z$. Reconstruction error is bounded below:
$$
\inf_{x^* \in \mathcal{M}_z} \|x^* - x\| \geq \delta > 0.$$

Thus, GMM prototypes cannot leak privacy of raw data because:

$(1)\mathcal{P} \rightarrow z \;$ \text{reconstruction loses} $>$ 99.99\% \text{information} $(I(z;\mathcal{P}) \rightarrow 0).$

$(2)z \rightarrow x$ \text{inversion is ill-posed} (\text{rank deficiency in} $J_f$).

\section{Experiment Details}

\subsection{Optimizer Parameters} Both of local training and projection layer retraining use SGD optimizer to train network parameters, where momentum is 0.5 unless otherwise stated. In addition,  projection layer retraining requires freezing the parameters of the feature extractor $f(\omega_1, \cdot)$.

\subsection{Hyperparameters Settings}
\begin{table}[htbp] \scriptsize
	\centering
	\caption{Detailed hyperparameters of GFPL}
	\begin{tabular}{lcccc}
		\toprule
		& MNIST & FEMNIST & CIFAR10 & CIFAR100 \\
		\midrule
		Projection dim & 50  & 128 & 80  & 80  \\
		Average shot $\bar{s}$      & 100 & 92  & 100 & 100 \\
		Learning rate $\eta$  & 0.01 & 0.01 & 0.01 & 0.01 \\
		$\lambda$      & 2   & 2   & 2   & 2   \\
		Batch size  $B$   & 4   & 4   & 32  & 32  \\
		Num\_channels  & 1   & 1   & 3   & 3   \\
		Threshold $S_C$  & 1   & 1   & 1   & 1   \\
		\bottomrule
	\end{tabular}%
	\label{tab:addlabel}%
\end{table}

\subsection{Error Bars}
We conduct comparative study on three datasets: MNIST, FEMNIST and CIFAR10 and different average of ways $\bar{w}=3,\bar{w}=4,\bar{w}=5$. To mitigate the impact of random errors on the experimental conclusions, each configuration was independently tested five times, and the averaged results are reported in Table 1. The error bars for test accuracy across the datasets are illustrated in Fig. \ref{fig:combined_error_bars}.

\begin{figure}[htbp]
	\centering
	\begin{subfigure}{0.32\linewidth}
		\centering
		\fbox{\includegraphics[width=\linewidth]{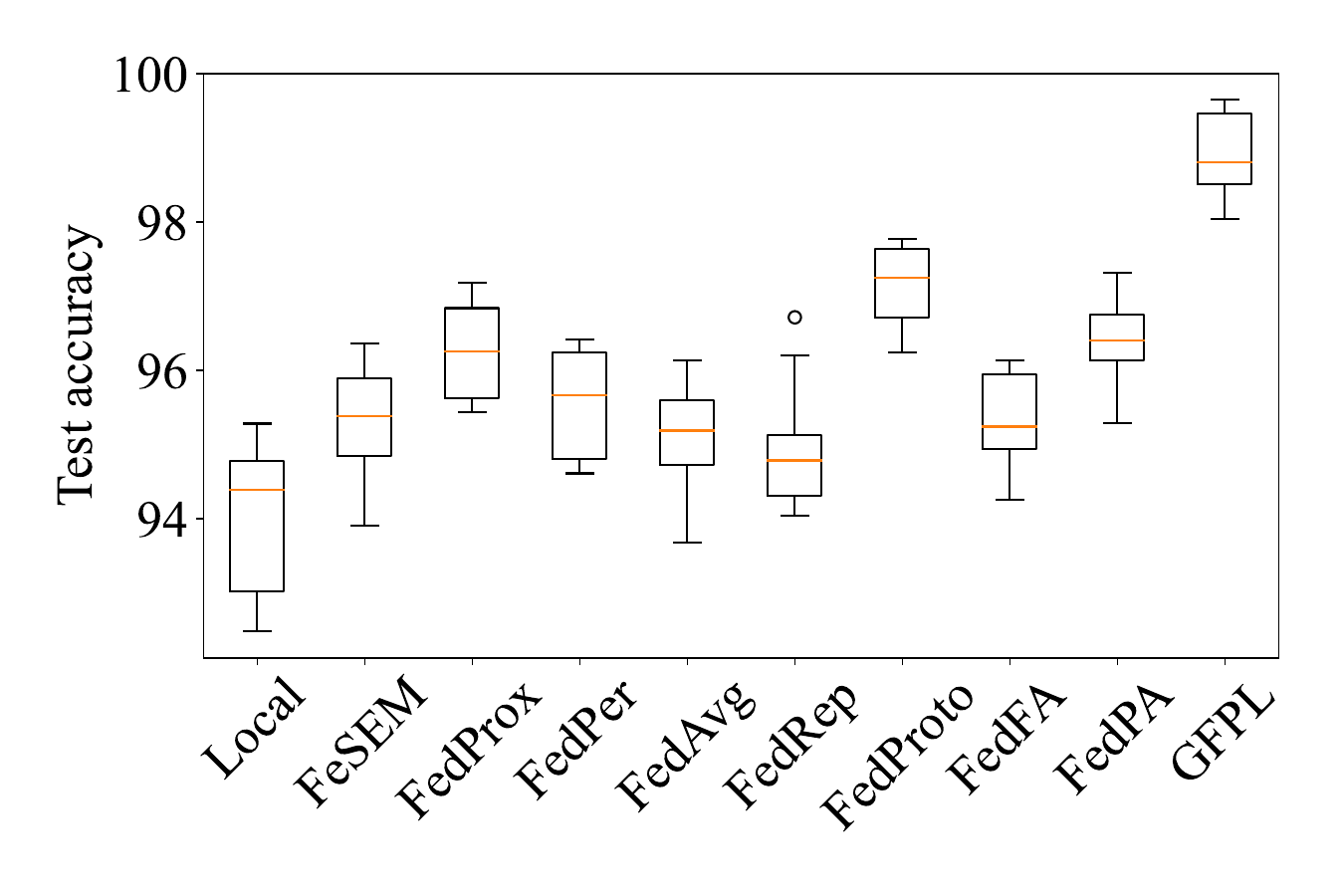}}
		\caption{MNIST, $\bar{w}=3$}
		\label{fig:fig_5a}
	\end{subfigure}
	\hfill
	\begin{subfigure}{0.32\linewidth}
		\centering
		\fbox{\includegraphics[width=\linewidth]{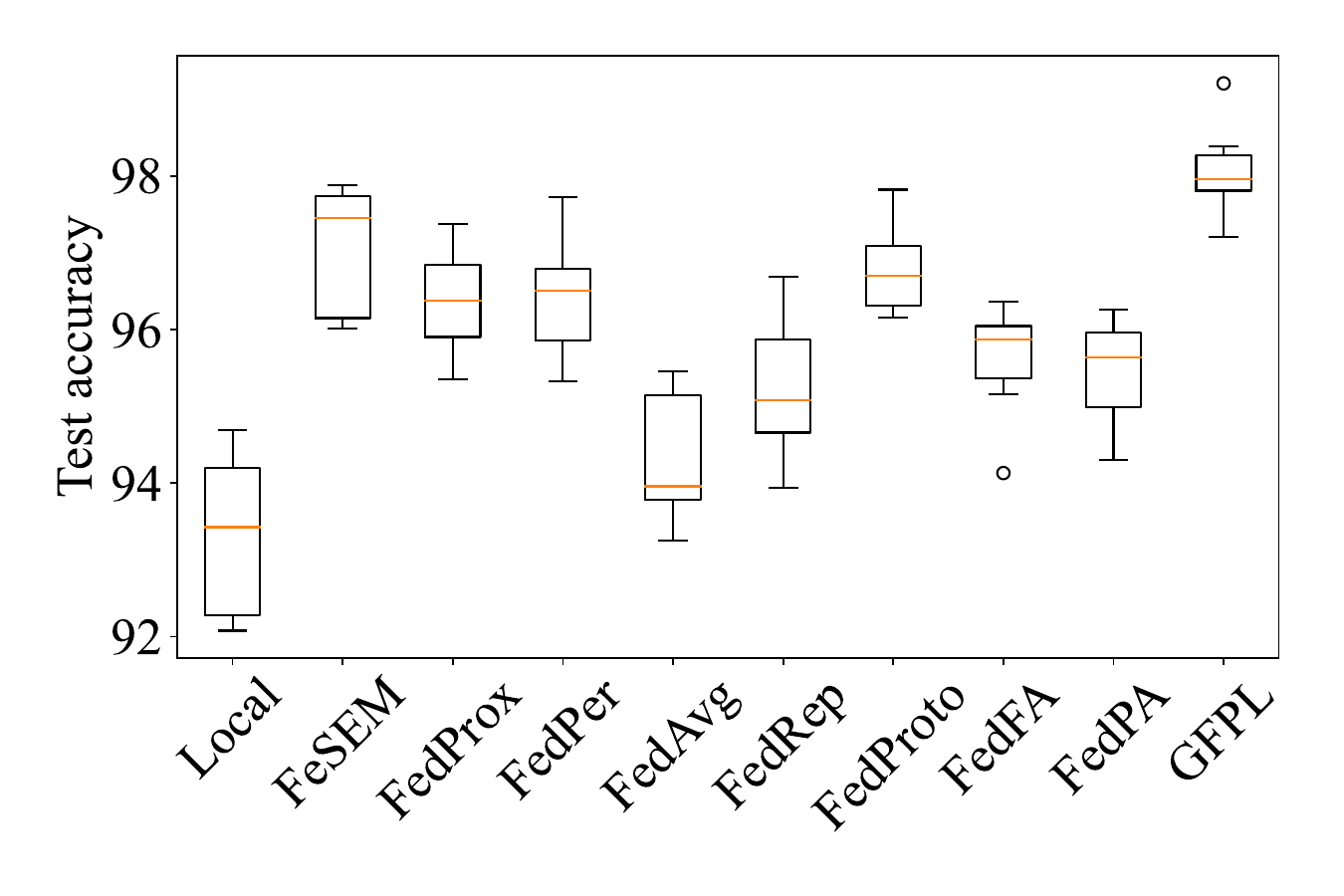}}
		\caption{MNIST, $\bar{w}=4$}
		\label{fig:fig_5b}
	\end{subfigure}
	\hfill
	\begin{subfigure}{0.32\linewidth}
		\centering
		\fbox{\includegraphics[width=\linewidth]{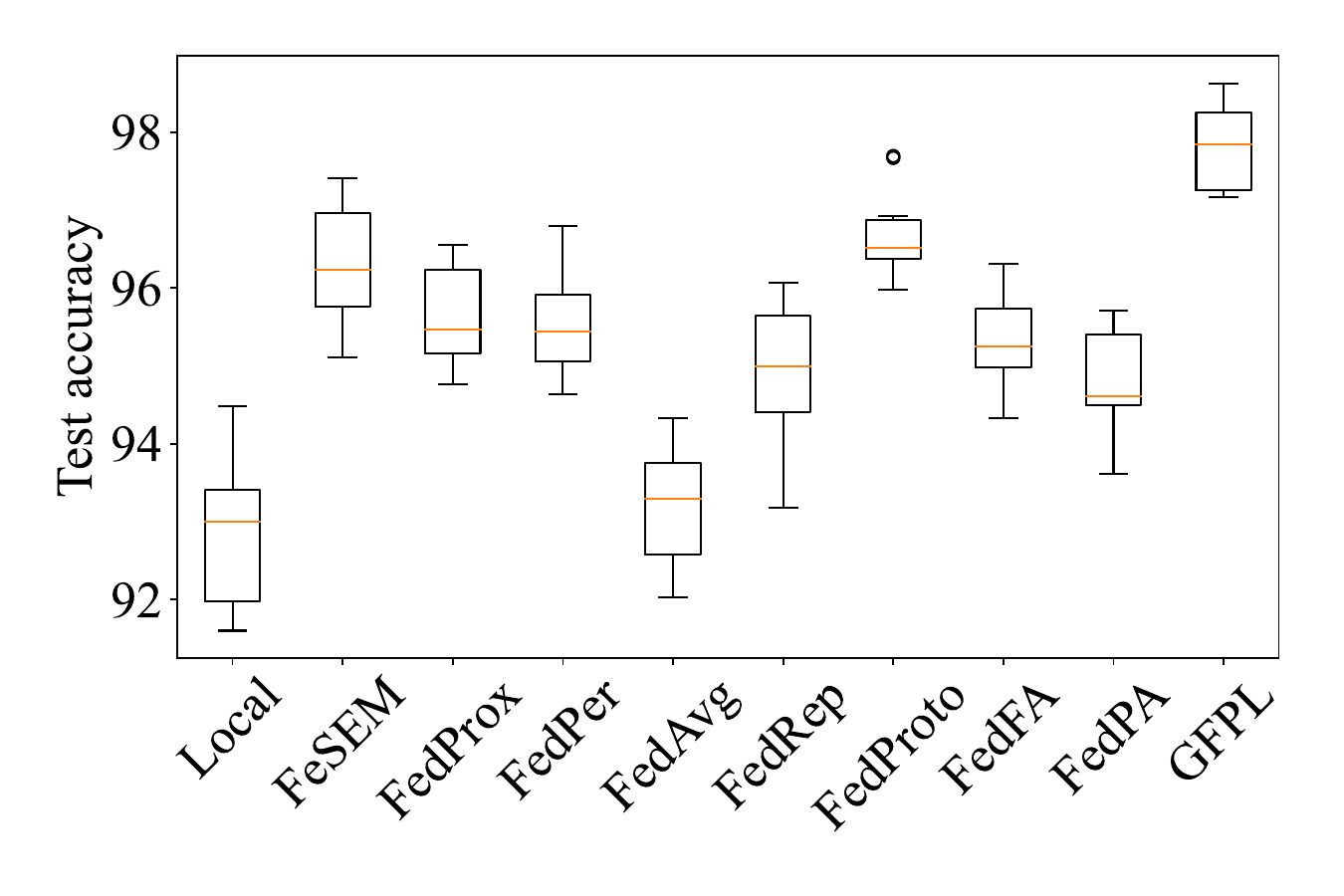}}
		\caption{MNIST, $\bar{w}=5$}
		\label{fig:fig_5c}
	\end{subfigure}
	
	\vspace{0.5cm} 
	
	\begin{subfigure}{0.32\linewidth}
		\centering
		\fbox{\includegraphics[width=\linewidth]{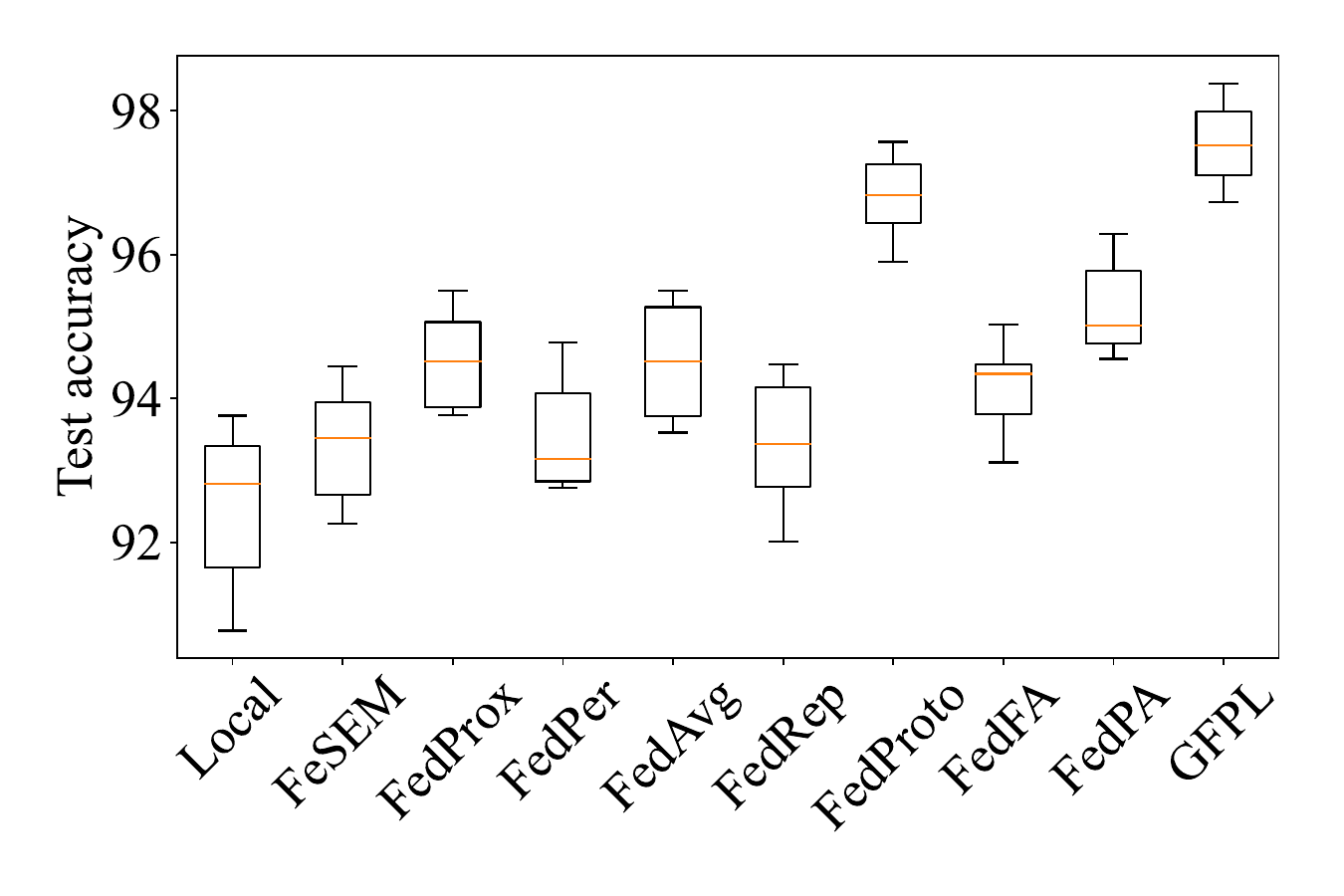}}
		\caption{FEMNIST, $\bar{w}=3$}
		\label{fig:fig_5d}
	\end{subfigure}
	\hfill
	\begin{subfigure}{0.32\linewidth}
		\centering
		\fbox{\includegraphics[width=\linewidth]{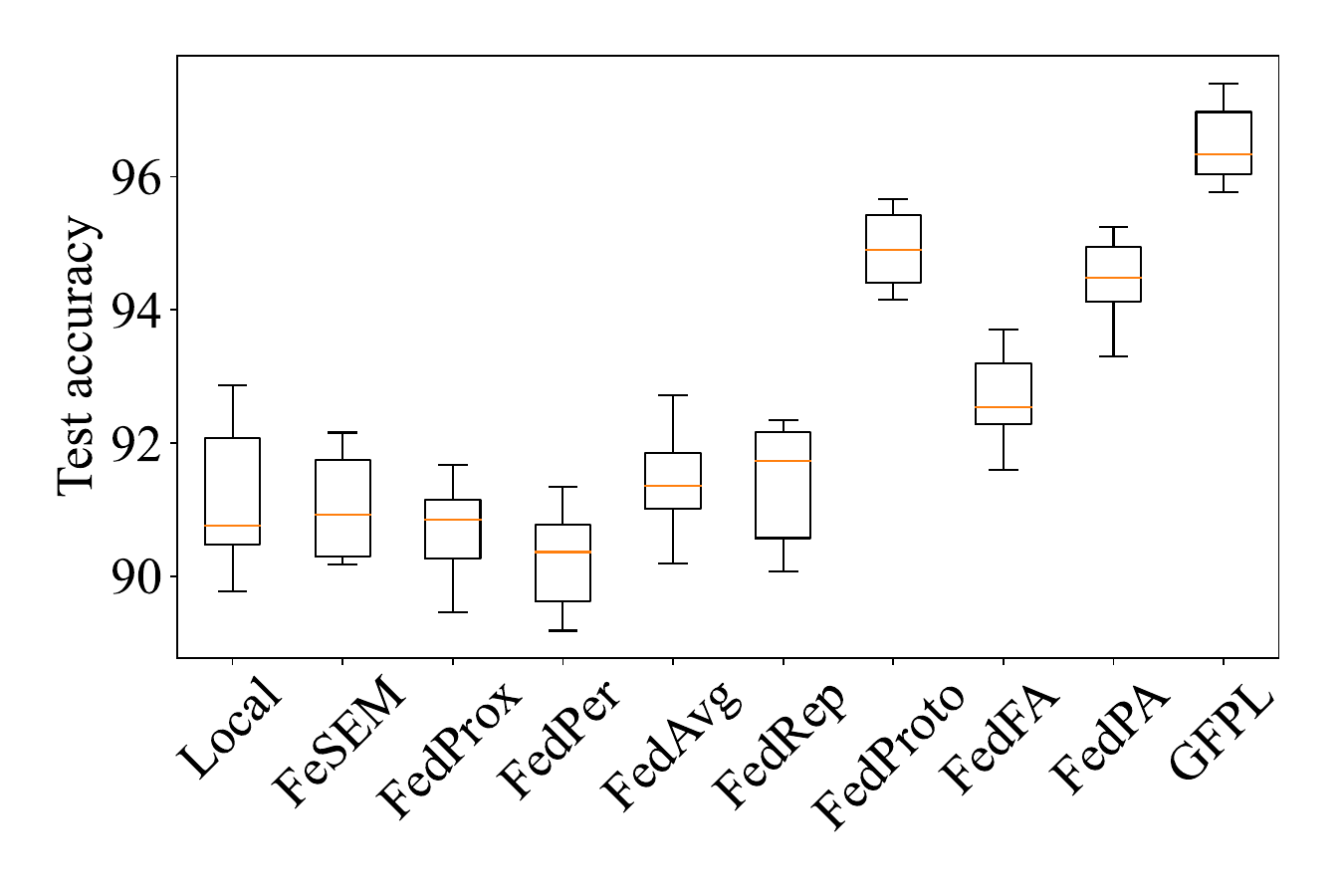}}
		\caption{FEMNIST, $\bar{w}=4$}
		\label{fig:fig_5e}
	\end{subfigure}
	\hfill
	\begin{subfigure}{0.32\linewidth}
		\centering
		\fbox{\includegraphics[width=\linewidth]{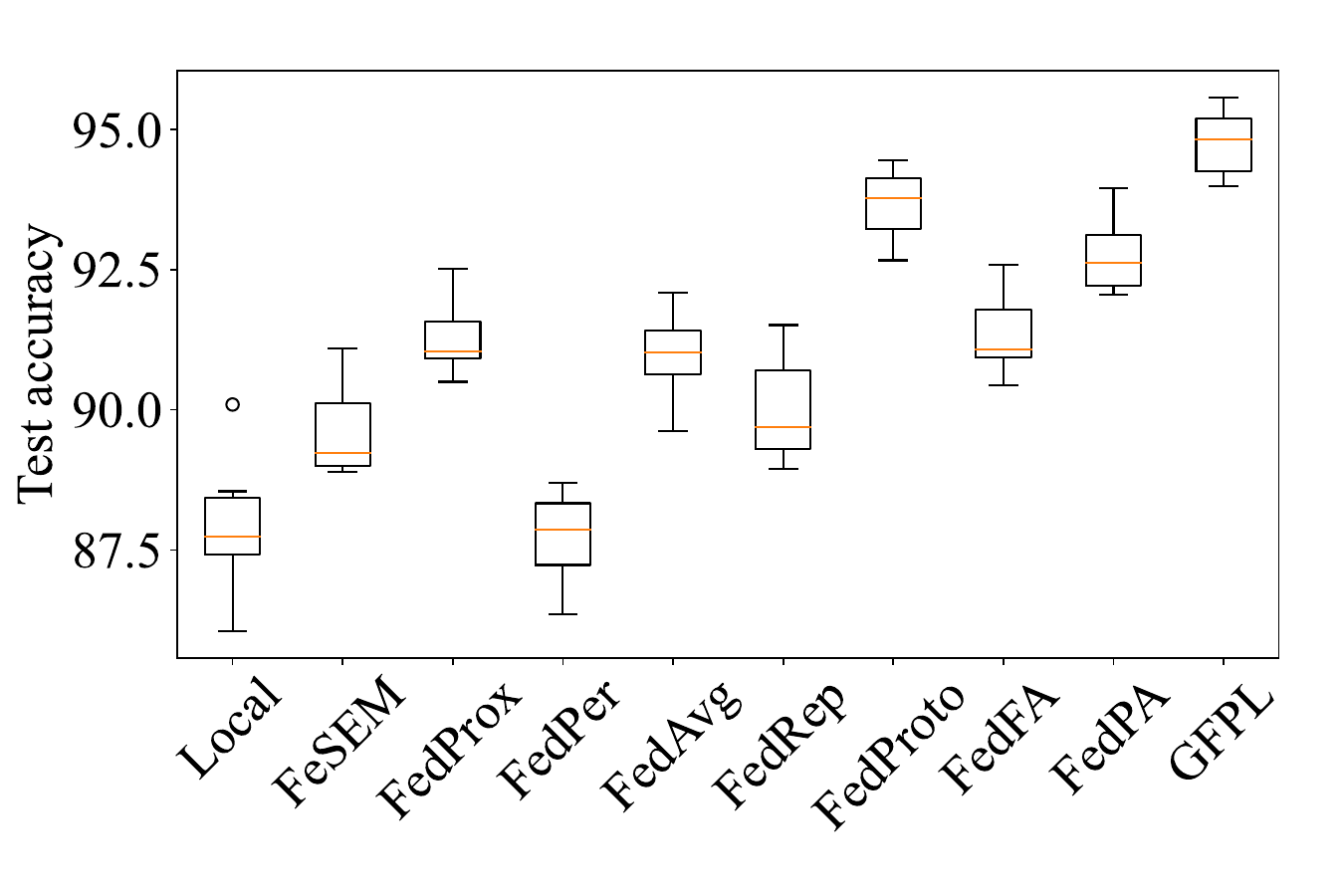}}
		\caption{FEMNIST, $\bar{w}=5$}
		\label{fig:fig_5f}
	\end{subfigure}
	
	\vspace{0.5cm} 
	
	\begin{subfigure}{0.32\linewidth}
		\centering
		\fbox{\includegraphics[width=\linewidth]{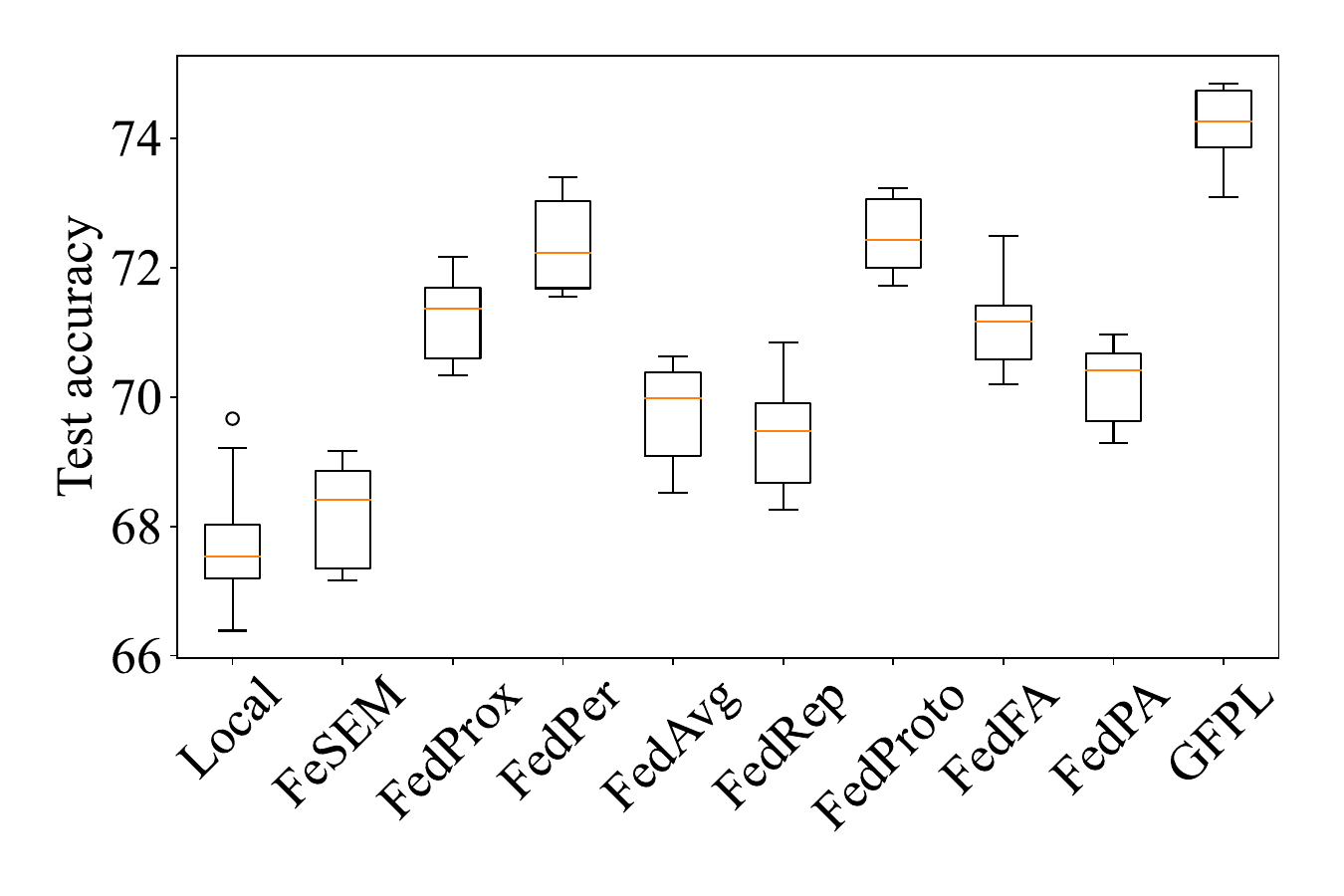}}
		\caption{CIFAR10, $\bar{w}=3$}
		\label{fig:fig_5g}
	\end{subfigure}
	\hfill
	\begin{subfigure}{0.32\linewidth}
		\centering
		\fbox{\includegraphics[width=\linewidth]{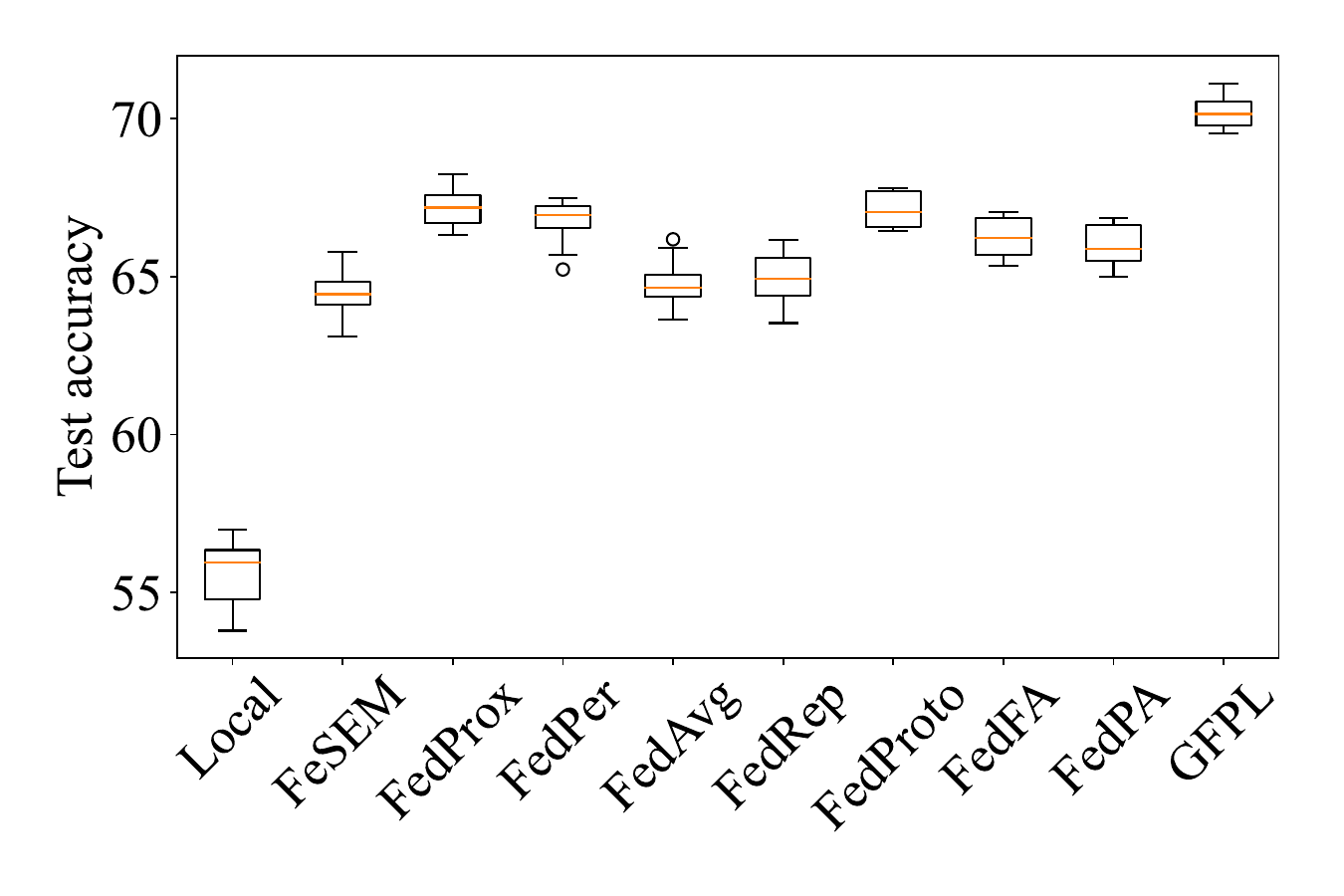}}
		\caption{CIFAR10, $\bar{w}=4$}
		\label{fig:fig_5h}
	\end{subfigure}
	\hfill
	\begin{subfigure}{0.32\linewidth}
		\centering
		\fbox{\includegraphics[width=\linewidth]{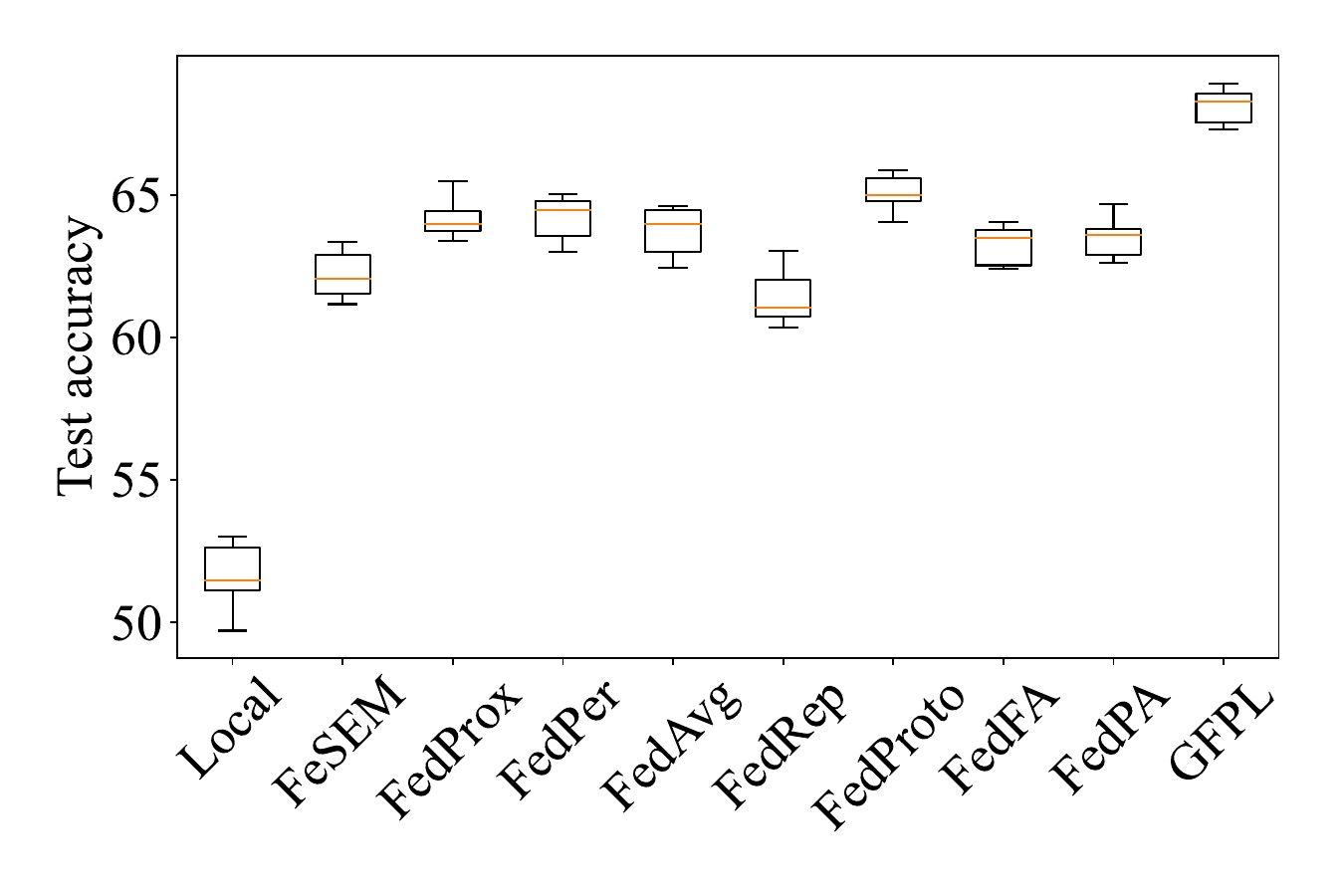}}
		\caption{CIFAR10, $\bar{w}=5$}
		\label{fig:fig_5i}
	\end{subfigure}
	
	\caption{Error bars of test accuracy on different datasets: (a-c) MNIST dataset with different $\bar{w}$ values; (d-f) FEMNIST dataset with different $\bar{w}$ values; (g-i) CIFAR10 dataset with different $\bar{w}$.}
	\label{fig:combined_error_bars}
\end{figure}
\subsection{Further Hyperparameters Settings}
Fig.\ref{fig:hyperpara_sc} shows the average test accuracy and communication under different threshold $S_C$.
We observe that the choice of the threshold $S_C$ critically influences the model's final average test accuracy. An inappropriately set threshold can lead to two distinct issues:
\begin{itemize}
	\item \textbf{Excessively small $S_C$}: A threshold that is too small impedes the effective fusion of Gaussian mixture components with highly overlapping distributions during the prototype fusion process. This results in a larger number of unmerged prototypes in the global prototype set, consequently increasing both communication and computational overhead.
	
	\item \textbf{Excessively large $S_C$}: Conversely, an overly large threshold causes the fusion of prototypes with substantially divergent distributions. This inappropriate merging yields fused prototypes that lack representativeness, ultimately degrading the quality of the generated pseudo-features and model performance.
\end{itemize}

Therefore, selecting an optimal $S_C$ value is crucial for balancing prototype fusion efficacy and feature representation quality.

\begin{figure}[htbp]
	\centering
	\begin{subfigure}{0.48\linewidth}
		\centering
		\includegraphics[width=\linewidth]{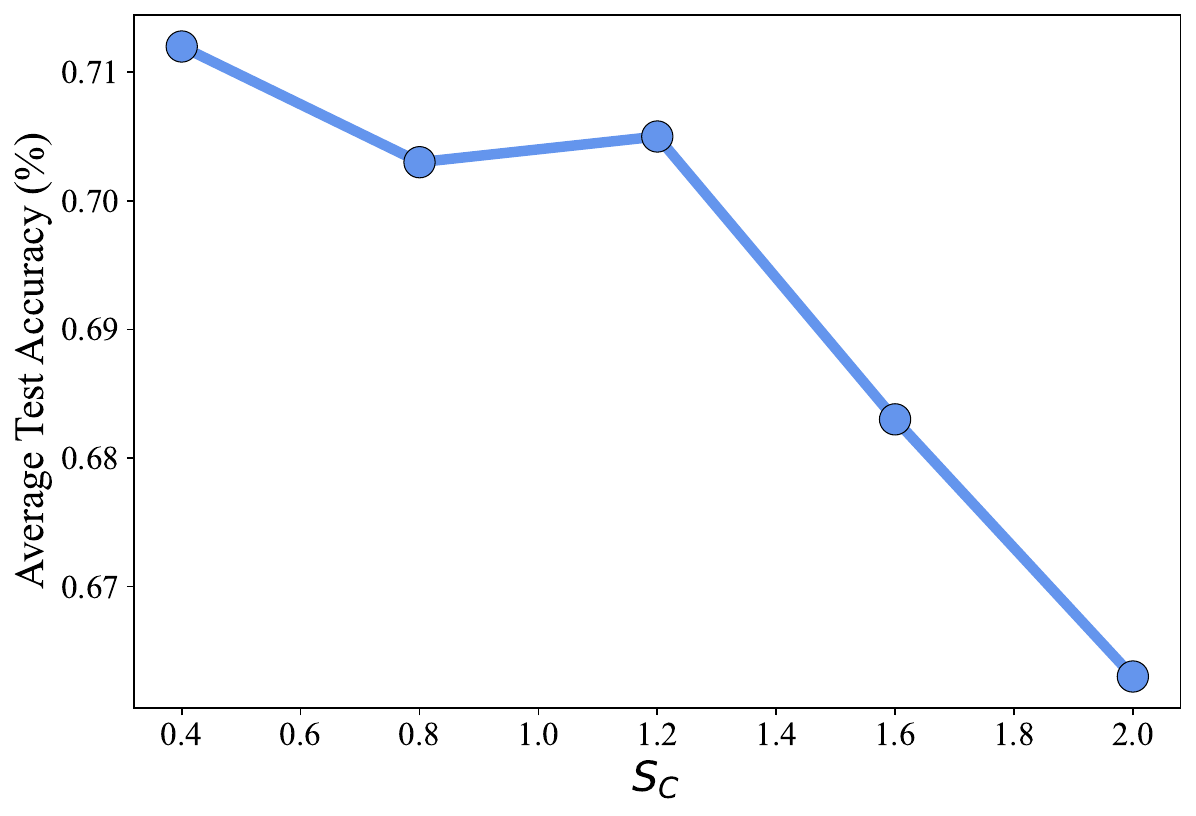}
		\caption{}
		\label{fig:fig_6a}
	\end{subfigure}
	\hfill
	\begin{subfigure}{0.48\linewidth}
		\centering
		\includegraphics[width=\linewidth]{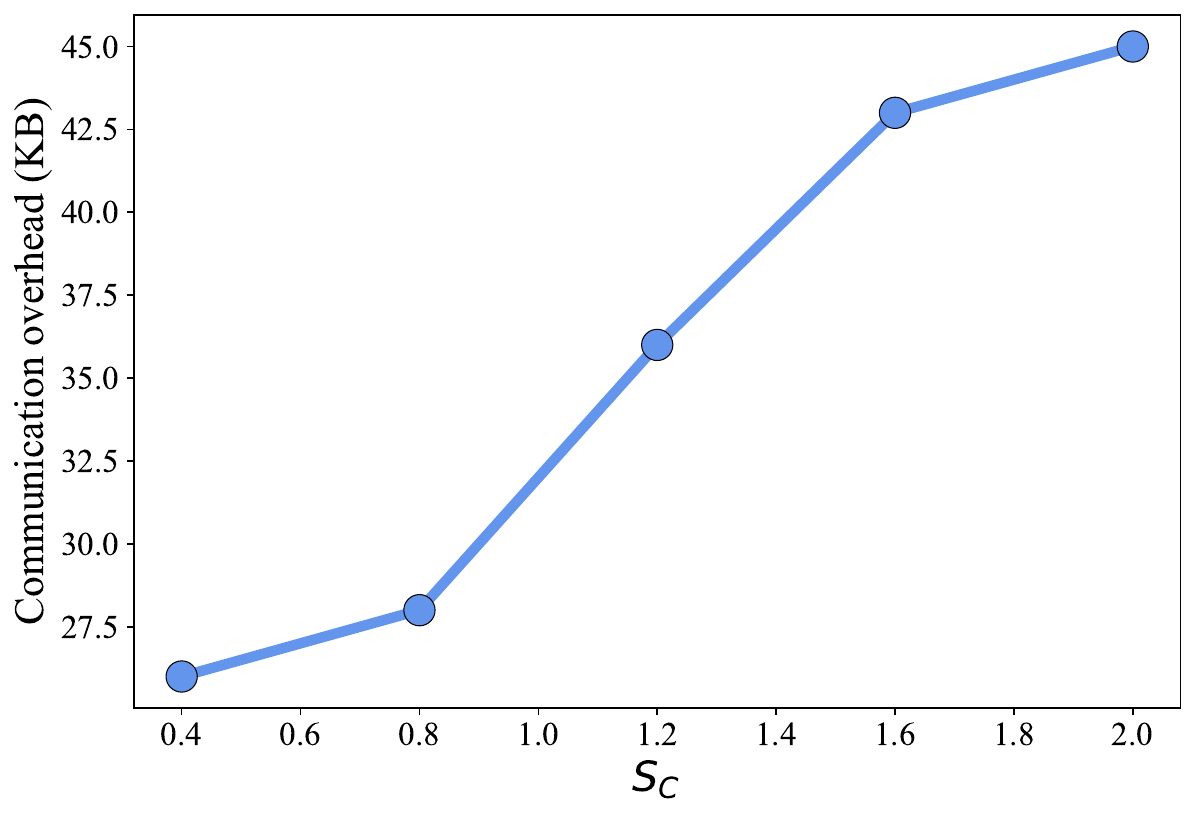}
		\caption{}
		\label{fig:fig_6b}
	\end{subfigure}
	\caption{The influence of hyperparameters $S_C$ on average test accuracy and communication overhead}
	\label{fig:hyperpara_sc}
\end{figure}

\section{Communication Overhead} \label{A3}

The communication overhead of each client in GFPL is primarily influenced by three hyperparameters: $n,m,\bar{w}$. As illustrated in Fig. \ref{fig:fig_7a}, the communication overhead scales linearly with 
$n$, where  $n$ represents the component number of the Gaussian Mixture Model (GMM) used to generate the local prototypes 
$\textbf {\textit{P}}_{i,j}$ and the size of local prototype is determined by $n$. Since $n$ directly determines both the model accuracy and communication efficiency, selecting an appropriate GMM component number is critical for balancing these two aspects in GFPL.

Fig. \ref{fig:fig_7b} demonstrates the correlation between the client number $m$ and the communication overhead. An increase in $m$ necessitates more frequent splicing operations for global prototype aggregation, which linearly amplifies the global prototype size and consequently raises the communication overhead on individual clients.

Furthermore, Fig. \ref{fig:fig_7c} reveals that a higher average number of local classes per client ($\bar{w}$) leads to increased communication overhead. This arises from the expanded volume of local prototypes of identical classes, which further enlarges the global prototype size during concatenation.

These observations highlight two key directions for future GFPL research: (1) adaptive optimization of the GMM component number $n$, and (2) development of efficient local prototype aggregation strategies to mitigate communication costs while maintaining model performance.
\begin{figure}[htbp]
	\centering
	\begin{subfigure}{0.32\linewidth}
		\centering
		\fbox{\includegraphics[width=\linewidth]{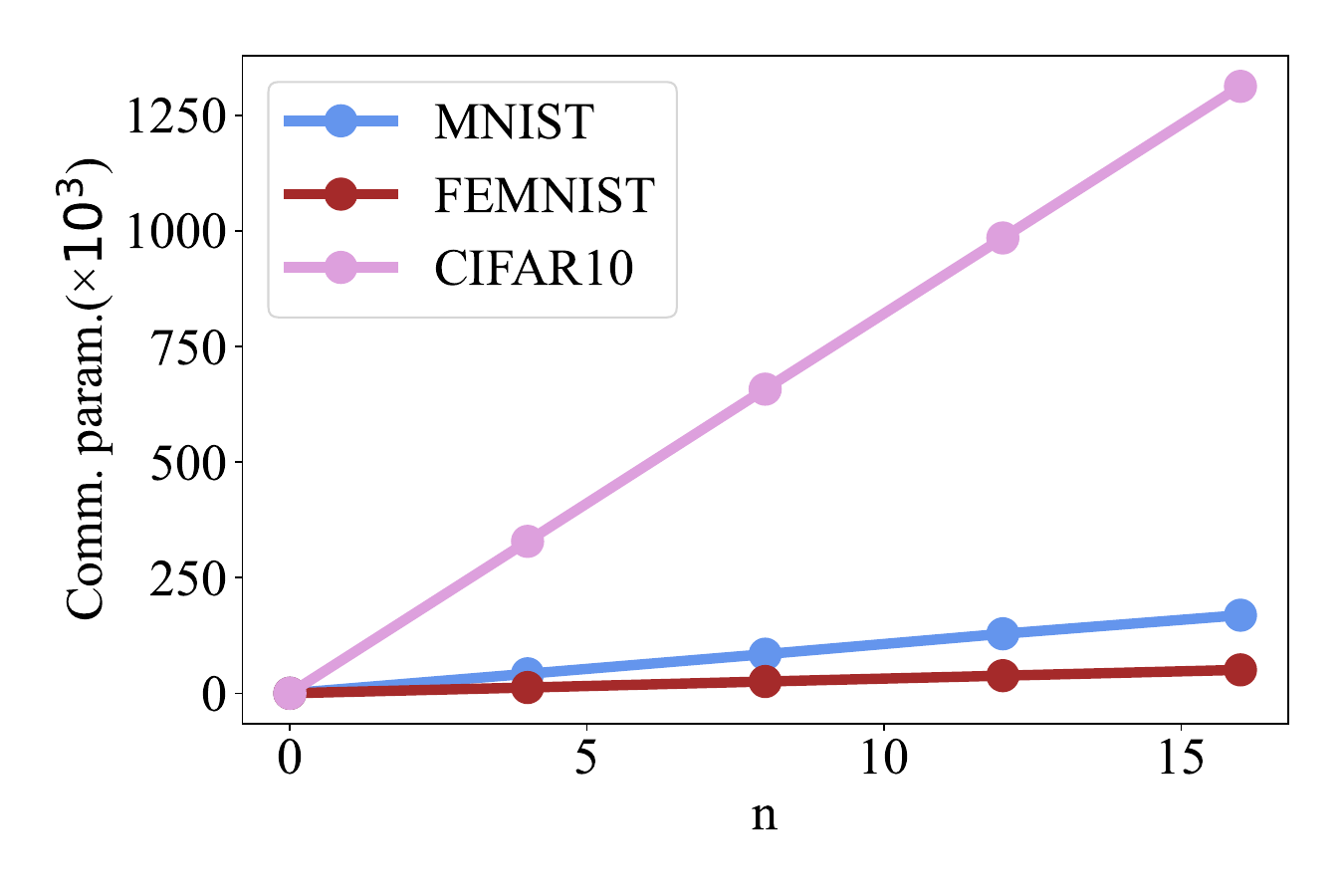}}
		\caption{\scriptsize{GMM components $n$}}
		\label{fig:fig_7a}
	\end{subfigure}
	\hfill
	\begin{subfigure}{0.32\linewidth}
		\centering
		\fbox{\includegraphics[width=\linewidth]{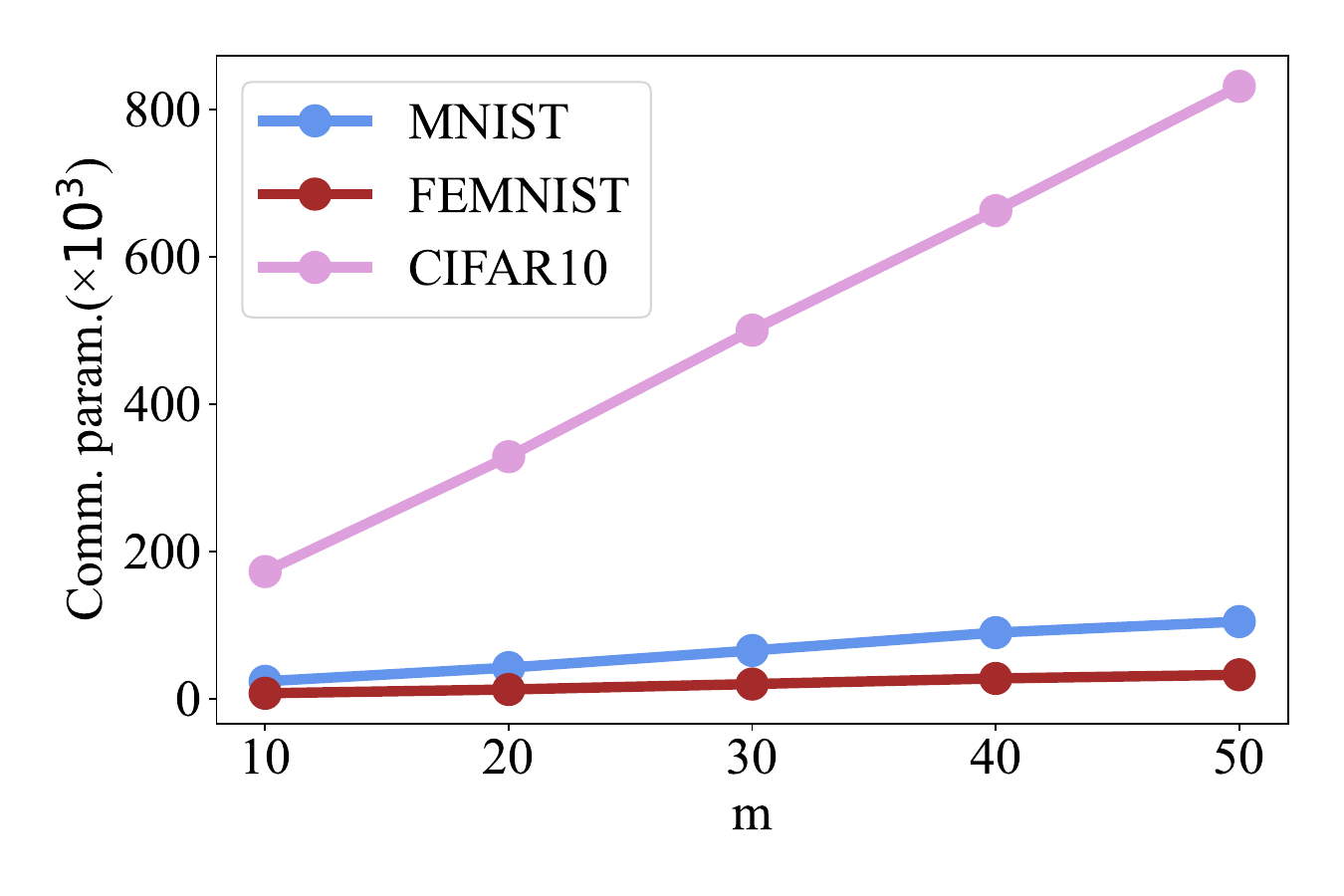}}
		\caption{\scriptsize{Clients $m$}}
		\label{fig:fig_7b}
	\end{subfigure}
	\hfill
	\begin{subfigure}{0.32\linewidth}
		\centering
		\fbox{\includegraphics[width=\linewidth]{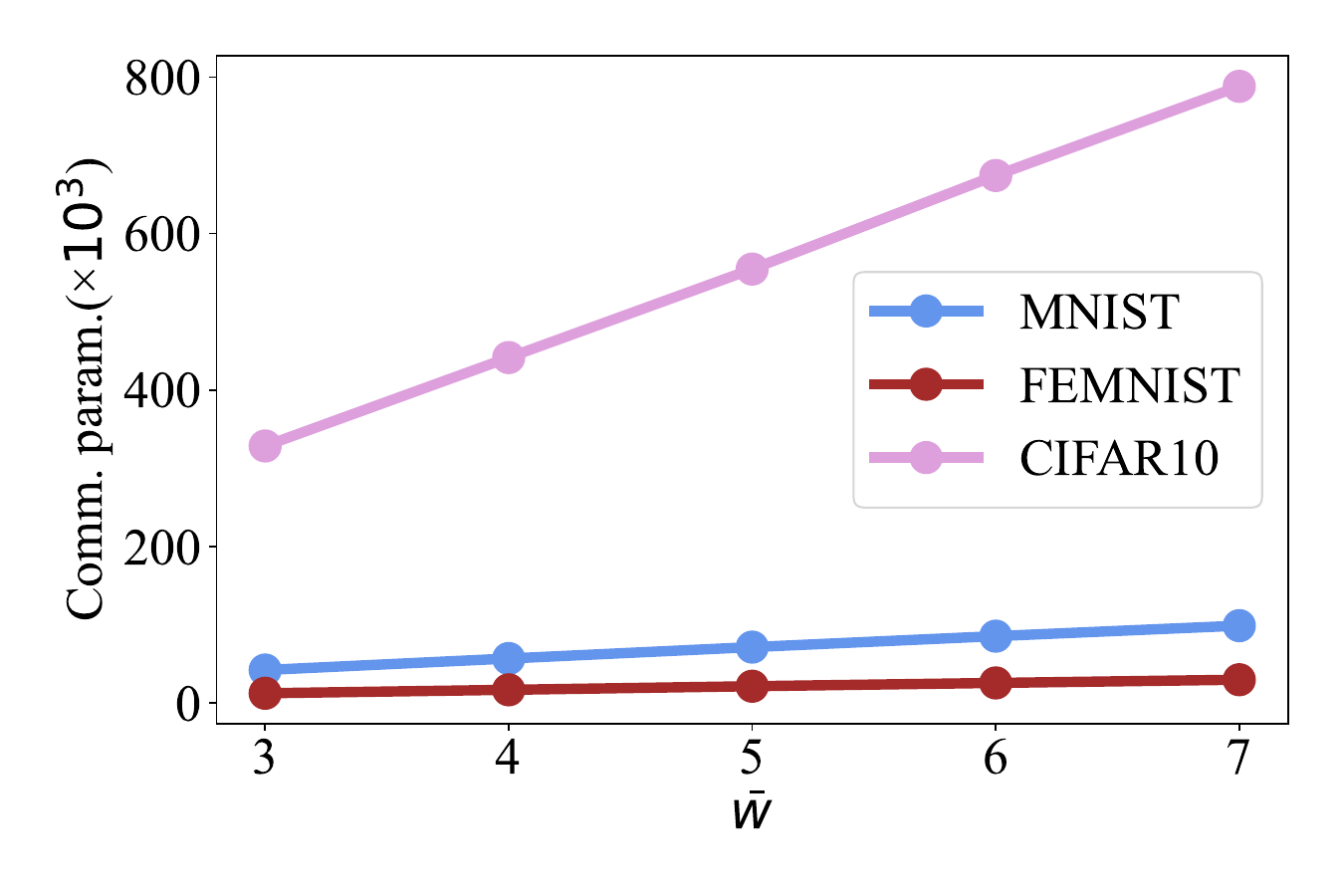}}
		\caption{\scriptsize{Class number average}}
		\label{fig:fig_7c}
	\end{subfigure}
	\caption{The influence of hyperparameters on communication overhead: (a) The number of GMM components $n$; (b) The number of clients $m$; (c) The average of class number for all clients.}
	\label{fig:comm_para}
\end{figure}


{
    \small
    \bibliographystyle{ieeenat_fullname}
    \bibliography{main}
}


\end{document}